\documentclass[11pt]{article} %
\usepackage{fullpage}
\usepackage{hyperref}
\usepackage{url}
\usepackage{graphicx}

\usepackage[round]{natbib}%
\usepackage{multirow,rotating}
%\defaultbibliography{mybib}
%\defaultbibliographystyle{abbrvnat}

\usepackage{amsthm}
\usepackage{amsmath,amssymb,latexsym,cite}
\usepackage[ruled,noend,noline,nofillcomment]{algorithm2e}
\usepackage{algorithmic}
\usepackage{stmaryrd}
\usepackage{enumerate}
\usepackage{enumitem}
\usepackage{tikz}
\usetikzlibrary{arrows}
\usetikzlibrary{calc}
\renewenvironment{proof}[1][\unskip]{   %
	\par\noindent{\em Proof #1.\ }}{\hfill\qed\par}

\newcommand{\cA}{\mathcal{A}}
\newcommand{\cB}{\mathcal{B}}

\newcommand{\cD}{\mathcal{D}}

\newcommand{\cF}{\mathcal{F}}

\newcommand{\cH}{\mathcal{H}}

\newcommand{\cX}{\mathcal{X}}
\newcommand{\cY}{\mathcal{Y}}

\newtheorem{theorem}{Theorem}[section]
\newtheorem{lemma}[theorem]{Lemma}
\newtheorem{cor}[theorem]{Corollary}

\newtheorem{fact}{Fact}
\renewcommand{\eqref}[1]{Eq.~(\ref{#1})}

\newcommand{\tabref}[1]{Table~\ref{#1}}        

\newcommand{\thmref}[1]{Theorem~\ref{#1}}
\newcommand{\lemref}[1]{Lemma~\ref{#1}}

\newcommand{\appref}[1]{Appendix~\ref{#1}}

\newcommand{\algref}[1]{Alg.~\ref{#1}}
\newcommand{\factref}[1]{Fact~\ref{#1}}

\def\mbf{\mathbf}

\def\argmin{\mathop{\rm argmin}}
\def\argmax{\mathop{\rm argmax}}

\renewcommand{\P}{\mathbb{P}}
\newcommand{\ap}[1]{\llbracket{#1}\rrbracket}

\newcommand{\binlab}{\{-1,+1\}}
\newcommand{\err}{\mathrm{err}}
\newcommand{\E}{\mathbb{E}}
\newcommand{\Var}{\textrm{Var}}

\newcommand{\one}{\mathbb{I}}

\newcommand{\vb}{\mathbf{b}}
\newcommand{\va}{\mathbf{a}}
\def\emax{\eta_{\mathrm{max}}}
\def\emin{\eta_{\mathrm{min}}}

\newcommand{\reals}{\mathbb{R}}

\newcommand{\half}{{\frac12}}
\newcommand{\ceil}[1]{{\lceil #1\rceil}}
\newcommand{\floor}[1]{\lfloor #1\rfloor}

\newcommand{\cht}{\cH_{\dashv}}
\newcommand{\chr}{\cH_{\Box}}
\newcommand{\errneg}{\err_{\mathrm{neg}}}

\newcommand{\OPT}{\mathrm{OPT}}

\newcommand{\caud}{\mathrm{OPT}_{\mathrm{cost}}}

\newcommand{\cost}{\mathrm{cost}}

\title{Auditing: Active Learning with Outcome-Dependent Query Costs}
\author{Sivan Sabato\thanks{Microsoft Research New England, Cambridge, MA USA, \texttt{sivan.sabato@microsoft.com}} \qquad Anand Sarwate\thanks{Toyota Technological Institute at Chicago, Chicago, IL USA, \texttt{asarwate@ttic.edu}} \qquad Nathan Srebro\thanks{Toyota Technological Institute at Chicago, Chicago, IL USA, \texttt{nati@ttic.edu}}}
\date{\today}

\begin{document}
\maketitle

%
%\begin{bibunit}[abbrvnat]
\begin{abstract}
  We propose a learning setting in which unlabeled data is free, and
  the cost of a label depends on its value, which is not known in
  advance. We study binary classification in an extreme case, where
  the algorithm only pays for negative labels. Our motivation are
  applications such as fraud detection, in which investigating an
  honest transaction should be avoided if possible. We term the
  setting \emph{auditing}, and consider the \emph{auditing complexity}
  of an algorithm: the number of negative labels the algorithm
  requires in order to learn a hypothesis with low relative error. We
  design auditing algorithms for simple hypothesis classes (thresholds
  and rectangles), and show that with these algorithms, the auditing
  complexity can be significantly lower than the active label
  complexity.  We also discuss a general competitive approach for
  auditing and possible modifications to the framework.
\end{abstract}

\section{Introduction}
Active learning algorithms seek to mitigate the cost of learning by using unlabeled data and sequentially selecting examples to query for their label to minimize total number of queries.
In some cases, however, the actual cost of each query depends on the true label of the example and is thus {\em not known} before the label is requested. For instance, in detecting fraudulent credit transactions, a query with a positive answer is not wasteful, whereas a negative answer is the result of a wasteful investigation of an honest transaction, and perhaps a loss of good-will.
More generally, in a multiclass setting, different queries may entail different costs, depending on the outcome of the query.
In this work we focus on the binary case, and on the extreme version of the problem, as described in the example of credit frauds, in which the algorithm only pays for queries which return a negative label. We term this setting \emph{auditing}, and the cost incurred by the algorithm its \emph{auditing complexity}.

There are several natural ways to measure performance for auditing.
For example, we may wish for the algorithm to maximize the number of
positive labels it finds for a fixed ``budget'' of negative labels, or
to minimize the number of negative labels while finding a certain
number or fraction of positive labels. In this work we focus on the
classical learning problem, in which one attempts to learn a
classifier from a fixed hypothesis class, with an error close to the
best possible.  Similar to active learning, we assume we are given a
large set of unlabeled examples, and aim to learn with minimal
labeling cost.  But unlike active learning, we only incur a cost when
requesting the label of an example that turns out to be negative.

The close relationship between auditing and active learning raises
natural questions.  Can the auditing complexity be significantly
better than the label complexity in active learning?  If so, should
algorithms be optimized for auditing, or do optimal active learning
algorithms also have low auditing complexity?  To answer these
questions, and demonstrate the differences between active learning and
auditing, we study the simple hypothesis classes of thresholds and of axis-aligned
rectangles in $\reals^d$, in both the
realizable and the agnostic settings.  We then also consider a general
competitive analysis for arbitrary hypothesis classes.

Existing work on active learning with costs~\citep{Margineantu:07active,KapoorHB:07,Settles:08annotation,GolovinKr11} typically assumes that the cost of labeling each point is known \textit{a priori}, so the algorithm can use the costs directly to select a query.  Our model is significantly different, as the costs depend on the outcome of the query itself. 
\citet{KapoorHB:07} do mention the possibility of class-dependent costs, but this possibility is not studied in detail. %
An unrelated game-theoretic learning model addressing ``auditing'' was proposed by~\citet{BlockiCDS:11}.

\subsubsection*{Notation and Setup}

For an integer $m$, let $[m] = \{1,2,\ldots, m\}$.  
The function $\one[A]$ is the indicator function of a set $A$. For a function $f$ and a sub-domain $X$, $f|_X$ is the restriction of $f$ to $X$. For vectors $\mbf{a}$ and $\mbf{b}$ in $\mathbb{R}^d$, the inequality $\mbf{a} \le \mbf{b}$ implies $a_i \le b_i$ for all $i \in [d]$. 

We assume a data domain $\cX$ and a distribution $D$ over labeled data points in $\cX \times \binlab$. A learning algorithm may sample i.i.d.\ pairs $(X,Y)\sim D$. It then has access to the value of $X$, but the label $Y$ remains hidden until queried. The algorithm returns a labeling function $\hat{h}:\cX \rightarrow \binlab$. 
The error of a function $h:\cX \rightarrow \binlab$ on $D$ is $\err(D,h) = \E_{(X,Y) \sim D}[h(X) \neq Y].$
 The error of $h$ on a multiset $S \subseteq \cX \times \binlab$ is given by $\err(S,h) = \frac{1}{|S|} \sum_{(x,y)\in S} \one[ h(x) \ne y ].$
	The \emph{passive sample complexity} of an algorithm is the number of pairs it draws from $D$.
	The \emph{active label complexity} of an algorithm is the total number of label queries the algorithm makes. Its \emph{auditing complexity} is the number of queries the algorithm makes on points with negative labels. 
	
We consider guarantees for learning algorithms relative to a hypothesis class $\cH \subseteq \binlab^\cX$.  We denote the error of the best hypothesis in $\cH$ on $D$ by $\err(D,\cH) = \min_{h \in \cH}\err(D,h)$. Similarly, $\err(S,\cH) = \min_{h\in\cH}\err(S,h)$. We usually denote the best error for $D$ by $\eta = \err(D,\cH)$.

To describe our algorithms it will be convenient to define the following sample sizes, using universal constants $C,c >0$.  Let $\delta \in (0,1)$ be a confidence parameter, and let $\epsilon \in (0,1)$ be an error parameter. Let $m^{\mathrm{ag}}(\epsilon,\delta,d) = C(d+\ln(c/\delta))/\epsilon^2$.  If a sample $S$ is drawn from $D$ with $|S|=m^{\mathrm{ag}}(\epsilon,\delta,d)$ then with probability $1-\delta$, $\forall h \in \cH, \err(D,h) \leq \err(S,h)+\epsilon$ and $\err(S,\cH) \leq \err(D,\cH) + \epsilon$ \citep{BartlettMe02}.   Let $m_\nu(\epsilon, \delta, d) = C(d\ln(c/\nu\epsilon)+\ln(c/\delta))/\nu^2\epsilon$. Results of \citet{VapnikCh71} show that if $\cH$ has VC dimension $d$ and $S$ is drawn from $D$ with $|S| = m_{\nu}$, then for all $h\in\cH$,
\begin{align}\label{eq:vc}
\err(S,h) &\leq \max\left\{ \err(D,h)(1+\nu), \err(D,h)+ \nu \epsilon \right\}  \text{ and }\\
\err(D,h) &\leq \max\left\{ \err(S,h)(1+\nu), \err(S,h) + \nu \epsilon \right\}.\notag
\end{align}
\label{page:vc}%

\section{Active Learning vs. Auditing: Summary of Results}

The main point of this paper is that the auditing complexity
can be quite different from the active label complexity, and that
algorithms tuned to minimizing the audit label complexity give improvements
over standard active learning algorithms.
Before presenting these differences, we note that in some regimes,
neither active learning nor auditing can improve significantly over
the passive sample complexity.  In particular, a simple adaptation of
a result of
\citet{BeygelzimerDaLa09} establishes the following lower bound.
\begin{lemma}
\label{lem:vclower}
Let $\cH$ be a hypothesis class with VC dimension $d > 1$.  If an algorithm always finds a
hypothesis $\hat{h}$ with $\err(D,\hat{h}) \le \err(D,\cH) +
\epsilon$ for $\epsilon > 0$, then for any $\eta \in (0,1)$ there is a distribution $D$ with $\eta = \err(D,\cH)$
such that the auditing complexity of this algorithm for $D$ is $\Omega(d\eta^2/\epsilon^2)$.
\end{lemma}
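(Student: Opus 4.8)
The plan is to reduce from the standard active-learning lower bound of \citet{BeygelzimerDaLa09}, which already gives a hard family of distributions on which any algorithm achieving excess error $\epsilon$ must make $\Omega(d\eta^2/\epsilon^2)$ label queries. The only extra thing we need is that on the hard instance, essentially all the informative queries must land on negative points, so that the active label complexity and the auditing complexity coincide up to constants. First I would recall the structure of the \citet{BeygelzimerDaLa09} construction: one takes $d$ points (or $d-1$ points plus one ``anchor'' point, exploiting that $\cH$ shatters some set of size $d$), puts most of the probability mass on a single point whose label is deterministic and known, and spreads a small mass of order $\eta$ over the remaining shattered points, where each such point carries label noise tuned so that distinguishing the two candidate sub-hypotheses on that point requires $\Omega(\eta/\epsilon^2)$ samples landing there. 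Summing over the $\Theta(d)$ noisy points gives the $\Omega(d\eta^2/\epsilon^2)$ bound, since each noisy point is hit with probability $\Theta(\eta/d)$.

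The key modification is to \emph{choose the orientation of the hard construction so that the label that the algorithm is forced to query is the negative label}. Concretely, on each of the shattered ``noisy'' points we arrange the conditional distribution of $Y$ so that the label $+1$ occurs with the larger probability and the minority label $-1$ carries the discriminating information; equivalently, we pick the pair of competing hypotheses so that they disagree only by flipping points from $+1$ to $-1$. Then any algorithm that does not query enough points and observe their (negative) labels cannot tell the two hypotheses apart and incurs excess error $>\epsilon$ on one of them. Since on the heavy point and on the $+1$-labeled draws the algorithm learns nothing that separates the hypotheses, it must pay for $\Omega(\eta/\epsilon^2)$ negative-label queries at each of the $\Theta(d)$ noisy points, i.e.\ auditing complexity $\Omega(d\eta^2/\epsilon^2)$. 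Formally this is a standard Le Cam / Assouad two-point (or $d$-point product) argument: condition on the draw from $D$, note that the transcript the algorithm sees is a function only of the labels it queried, bound the total variation between the two label distributions restricted to any fixed set of queried points, and invoke Assouad's lemma across the $\Theta(d)$ coordinates.

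The main obstacle — really the only subtlety — is checking that an arbitrary VC class of dimension $d$ actually admits such a ``negatively oriented'' shattered configuration, since in \citet{BeygelzimerDaLa09} one only needs a shattered set and does not care which label is the majority. Here I would use that $\cH$ shatters some set $\{x_0,x_1,\dots,x_{d-1}\}$, fix a base hypothesis $h_0$ that labels all of $x_1,\dots,x_{d-1}$ as $+1$ (possible by shattering), and take the $d-1$ competitors $h_i$ that differ from $h_0$ only at $x_i$; all disagreements are then $+1 \to -1$ flips, which is exactly the orientation we want. The heavy point $x_0$ can be given any fixed label consistent with $h_0$, say $+1$, so queries there cost nothing and reveal nothing. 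With the configuration in hand, the rest is the routine calculation: set the noise level at each $x_i$ to $1/2 - c\epsilon/\eta$ on the $-1$ side versus $1/2 + c\epsilon/\eta$, put mass $\Theta(\eta/d)$ on each $x_i$ and the remaining mass on $x_0$, verify $\err(D,\cH)=\eta$ and that the excess error of picking the wrong $h_i$ is $>\epsilon$, and apply Assouad. One should also remark that the hypothesis $\hat h$ returned need not lie in $\cH$, but this does not affect the argument: the excess-error lower bound on the noisy points is a statement about $\err(D,\hat h)$ for \emph{any} labeling function, because on each $x_i$ the Bayes-optimal and the wrong prediction differ in error by the required amount regardless of what is predicted elsewhere.
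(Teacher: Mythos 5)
The paper itself gives no proof of this lemma; it is stated with only the remark that it is ``a simple adaptation of a result of \citet{BeygelzimerDaLa09},'' and no proof appears in \appref{app:proofs}. So your job is to reconstruct that adaptation, and your reconstruction is essentially on the right track, with two issues worth flagging.

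First, the orientation trick — choosing a base hypothesis $h_0$ that labels all the noisy shattered points $+1$, so that the $d-1$ competitors differ by $+1 \to -1$ flips — is presented as ``the key modification,'' but in fact it is not needed. At each noisy point $x_i$ the conditional distribution is $\P[Y=-1 \mid X = x_i] = \tfrac12 \pm \gamma$ with $\gamma = \Theta(\epsilon/\eta) < \tfrac14$ in the interesting regime, so \emph{regardless} of how the two competing hypotheses label $x_i$, any query at $x_i$ returns a negative label with probability at least $\tfrac14$. Hence the expected number of negative labels observed at $x_i$ is within a constant factor of the number of queries there, and the auditing lower bound follows directly from the active-learning query lower bound on the noisy points without touching the orientation. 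The only place orientation matters at all is the heavy point, which you can label $+1$ — but a rational algorithm would never query it anyway since its label is deterministic and uninformative, so even that is cosmetic. Your trick is harmless, and your observation that $\cH$ shattering a $d$-set lets you pick such an $h_0$ is correct, but you should not claim it is where the argument lives.

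Second, there is a recurring arithmetic slip. You write that distinguishing the two sub-hypotheses at a noisy point ``requires $\Omega(\eta/\epsilon^2)$ samples landing there,'' and later that the algorithm ``must pay for $\Omega(\eta/\epsilon^2)$ negative-label queries at each of the $\Theta(d)$ noisy points.'' With noise parameter $\gamma = c\epsilon/\eta$ (which you do state correctly), the number of observations needed to distinguish bias $\tfrac12+\gamma$ from $\tfrac12-\gamma$ is $\Theta(1/\gamma^2) = \Theta(\eta^2/\epsilon^2)$, not $\Theta(\eta/\epsilon^2)$; multiplying by $\Theta(d)$ points then gives the stated $\Omega(d\eta^2/\epsilon^2)$. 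As written, $d\cdot(\eta/\epsilon^2)$ does not produce $d\eta^2/\epsilon^2$, and the aside about each noisy point being ``hit with probability $\Theta(\eta/d)$'' belongs to the passive-sample-complexity accounting, not the active/auditing one (since unlabeled data are free, the algorithm can direct its queries at the noisy points at will). The constants and the final exponent are right in your later, more detailed description, so this is an expository slip rather than a broken argument, but it should be fixed.
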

That is, when $\eta$ is fixed while $\epsilon\rightarrow 0$, the
auditing complexity scales as $\Omega(d/\epsilon^2)$, similar to the
passive sample complexity.  Therefore the two situations which are interesting
are the realizable case, corresponding to $\eta=0$, and the agnostic case, when we want to
guarantee an excess error $\epsilon$ such that $\eta/\epsilon$ is
bounded.  We provide results for both of these regimes.  

We will first consider the \emph{realizable case}, when $\eta = 0$.  Here it is sufficient to consider the case where a fixed pool $S$ of $m$ points is given and the algorithm must return a hypothesis $\hat{h}$ such that $\err(S,\hat{h}) = 0$ with probability $1$.  A pool labeling algorithm can be used to learn a hypothesis which is good for a distribution by drawing and labeling a large enough pool. We define auditing complexity for an unlabeled pool as the minimal number of negative labels needed to perfectly classify it.  It is easy to see that there are pools with an auditing complexity at least the VC dimension of the hypothesis class. 

For the \emph{agnostic case}, when $\eta>0$, we denote $\alpha=\epsilon/\eta$ and
say that an algorithm $(\alpha,\delta)$-learns a class of
distributions $\cD$ with respect to $\cH$ if for all $D \in \cD$, with
probability $1-\delta$, $\hat{h}$ returned by the algorithm satisfies
$\err(D,\hat{h}) \le (1 + \alpha) \eta$.  By Lemma \ref{lem:vclower} an
auditing complexity of $\Omega(d/\alpha^2)$ is unavoidable, be we can
hope to improve over the passive sample complexity lower bound of
$\Omega(d/\eta\alpha^2)$ \citep{DevroyeLu95} by avoiding the dependence on $\eta$.

Our main results are summarized in \tabref{tab:main}, which shows
the auditing and active learning complexities in the two regimes,
for thresholds on $[0,1]$ and axis-aligned rectangles in $\reals^d$, 
where we assume that the hypotheses label the points in the rectangle as negative and points outside as positive.

\begin{table}[hb]
\centering
\begin{tabular}{p{2cm}l|c|c|c}
& &  \textbf{Active} & \textbf{Auditing} \\
\hline
\multirow{2}{*}{\vbox{\centering \textbf{\small Realizable}}}
	& Thresholds & $\Theta(\ln m)$ & $1$ \\
	& Rectangles & $m$ & $2d$ \\
\hline
\multirow{2}{*}{\vbox{\centering \textbf{\small Agnostic}  }}
	& Thresholds
	& $\Omega\left(\ln\left(\frac{1}{\eta}\right)+\frac{1}{\alpha^2}\right)$ 
	& $O\left( \frac{1}{\alpha^2} \right)$ \\	
	& Rectangles  
	&  $\Omega\left(d \left( \frac{1}{\eta} + \frac{1}{\alpha^2} \right) \right)$
	&  $O\left(d^2 \ln^2\left(\frac{1}{\eta}\right)\cdot \frac{1}{\alpha^2} \ln\left( \frac{1}{\alpha} \right) \right)$\\
\end{tabular}
\caption{Auditing complexity upper bounds vs. active label complexity lower bounds for realizable (pool size $m$) and agnostic ($\err(D,\cH) = \eta$) cases.  Agnostic bounds are for $(\alpha,\delta)$-learning with a fixed $\delta$, where $\alpha = \epsilon/\eta$.  \label{tab:main}
}
\end{table}

In the realizable case, for thresholds, the optimal active learning
algorithm performs binary search, resulting in $\Omega(\ln m)$ labels
in the worst case. This is a significant improvement over the passive
label complexity of $m$.  However, a simple auditing procedure that
scans from right to left queries only a single negative point,
achieving an auditing complexity of $1$. For rectangles, we
present a simple coordinate-wise scanning procedure with auditing complexity
of at most $2d$, demonstrating a huge gap versus active learning,
where the labels of all $m$ points might be required.  Not all classes
enjoy reduced auditing complexity: we also show that for rectangles
with positive points on the \emph{inside},
there exists pools of size $m$ with an auditing complexity of $m$.

In the agnostic case we wish to $(\alpha,\delta)$-learn distributions with a true error of $\eta = \err(D,\cH)$, for constant $\alpha,\delta$.
For active learning, it has been shown that in some cases, the $\Omega(d/\eta)$ passive sample complexity
can be replaced by an exponentially smaller $O(d\ln(1/\eta))$ active label complexity \citep{Hanneke11}, albeit sometimes with a larger polynomial dependence on $d$. In other cases, an $\Omega(1/\eta)$ dependence exists also for active learning.
Our main question is whether the dependence on $\eta$ in the active label complexity can be further reduced for auditing.

For thresholds, active learning requires $\Omega(\ln(1/\eta))$ labels \citep{Kulkarni93}. Using auditing, 
we show that the dependence on $\eta$ can be completely removed, for any true error level $\eta > 0$, {\em if we know
  $\eta$ in advance}.  We also show that if $\eta$ is not known
at least approximately, the logarithmic dependence on $1/\eta$ is also
unavoidable for auditing.  For rectangles, we show that
the active label complexity is at least $\Omega(d/\eta)$. In contrast, we propose an algorithm with an
auditing complexity of $O(d^2\ln^2(1/\eta))$, reducing the linear dependence on $1/\eta$ to a logarithmic
dependence.  We do not know whether a linear dependence on $d$ is
possible with a logarithmic dependence on $1/\eta$.

Most of the proofs are provided in \appref{app:proofs}.

\section{Auditing for Thresholds on the Line \label{sec:thresholds}}

The first question to ask is whether the audit label complexity can ever be significantly smaller than the active or passive label complexities, and whether a different algorithm is required to achieve this improvement. The following simple case answers both questions in the affirmative.
Consider the hypothesis class of \emph{thresholds on the line}, defined over the domain $\cX= [0,1]$.  A hypothesis with threshold $a$ is $h_a(x) = \one[x-a \geq 0]$.
The hypothesis class is $\cht = \{h_a \mid a \in [0,1]\}$. Consider
the pool setting for the realizable case. The optimal active label
complexity of $\Theta(\log_2 m )$ can be achieved by a binary search
on the pool. The auditing complexity of this algorithm can also be as
large as $\Theta(\log_2(m))$. However, auditing allows us to beat this
barrier. 
This case exemplifies an interesting contrast between auditing and active learning. Due to information-theoretic considerations, any algorithm which learns an unlabeled pool $S$ has an active label complexity of at least $\log_2 |\cH|_{S}|$ \citep{Kulkarni93}, where $\cH|_{S}$ is the set of restrictions of functions in $\cH$ to the domain $S$. For $\cht$, the active label complexity is thus $\log_2 |\cht|_{S}| = \Omega(\log_2 m )$. However, the same considerations are invalid for auditing. 

We showed that for the realizable case, the auditing label complexity
for $\cht$ is a constant.  We now provide a more complex algorithm that
guarantees this for $(\alpha,\delta)$-learning in the agnostic case.  
The intuition behind our approach is that in a pool with at most $k$ errors, querying from highest to lowest until observing $k+1$ negative points, and finding the minimal error threshold on the labeled points, yields the optimal threshold.

\begin{lemma} \label{lem:line:pool}
Let $S$ be a pool of size $m$ in $[0,1]$, and assume that $\err(S,\cht) \leq k/m$. Then the procedure above finds $\hat{h}$ such that $\err(S,\hat{h}) = \err(S,\cht)$ with an auditing complexity of $k+1$.
\end{lemma}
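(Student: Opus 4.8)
The plan is to track the coordinate at which the scan stops and relate it to the optimal threshold $a^*$ on $S$: once the scan has labeled every point down to the stopping coordinate, the error on $S$ of \emph{any} threshold lying above that coordinate splits as (mistakes on the labeled portion) $+$ (a quantity independent of the threshold). Since the procedure picks a threshold minimizing the first term, it will also minimize the total.

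Concretely, sort $S$ by coordinate, let the scan query points from largest to smallest and stop the instant it has seen $k+1$ negative labels, let $v$ be the coordinate of that $(k+1)$-st negative point, let $Q\subseteq S$ be the set of queried (hence labeled) points, and set $R=S\setminus Q$. Positive labels are free, so the scan pays for at most $k+1$ negatives, giving the claimed auditing complexity (if $S$ has fewer than $k+1$ negatives the scan labels all of $S$, $Q=S$, and the statement is immediate). First I would record the bookkeeping: $Q$ contains exactly $k+1$ negative points, all of coordinate $\ge v$; every point of $R$ has coordinate $\le v$; and since equal-coordinate points receive equal labels from every threshold, any positive point of $R$ is labeled $-1$ by every threshold $a>v$. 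Let $p$ be the number of positive points in $R$; this is a constant. Also, for a threshold $a$ write $q(a)$ for the number of points of $Q$ on which $h_a$ errs.

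The two key steps are (i) $a^*>v$, and (ii) the threshold $\hat a$ the procedure returns (a minimizer of the error on $Q$) also satisfies $\hat a>v$. For (i): if $a^*\le v$ then $h_{a^*}$ labels $+1$ every point of coordinate $\ge v$, hence misclassifies all $k+1$ negatives in $Q$, so $\err(S,h_{a^*})\ge (k+1)/m$, contradicting $\err(S,\cht)\le k/m$. For (ii): any threshold $a\le v$ misclassifies those same $k+1$ negatives, so makes at least $k+1$ mistakes on $Q$, i.e.\ $q(a)\ge k+1$; but $q(a^*)\le k$ since $a^*$ errs at most $k$ times on all of $S\supseteq Q$; hence every threshold minimizing $q(\cdot)$ must have coordinate $>v$, and in particular $\hat a>v$.

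Finally I would assemble the pieces. For any threshold $a>v$, $h_a$ agrees on $R$ with the all-negative labeling there, so $m\cdot\err(S,h_a)=q(a)+p$. Applying this to $\hat a$ and to $a^*$ (both $>v$) and using that $\hat a$ minimizes $q(\cdot)$,
\[
m\cdot\err(S,h_{\hat a})=q(\hat a)+p\le q(a^*)+p=m\cdot\err(S,h_{a^*})=m\cdot\err(S,\cht),
\]
so $\err(S,\hat h)=\err(S,\cht)$. The step needing the most care is (i)--(ii): the whole argument hinges on the stopping rule ($k+1$ negatives) forcing the stopping coordinate $v$ strictly below the optimal threshold, so that no information about the pool below $v$ is lost; tie-breaking at coordinate $v$ is the only delicate point and is dispatched by the remark that equal-coordinate points must get the same label from any threshold.
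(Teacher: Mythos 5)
Your proposal is correct and follows essentially the same route as the paper's proof: show the optimal threshold $a^*$ lies above the stopping coordinate (else it would err on all $k+1$ queried negatives), and observe that for every threshold above that coordinate the error on $S$ equals the error on the queried set plus a threshold-independent constant (the positives below the stopping point). Your step (ii), verifying that the empirical minimizer on the queried set itself lies above the stopping coordinate, is left implicit in the paper's argument, so your write-up is a slightly more careful version of the same proof.
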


\begin{proof}
Denote the last queried point by $x_0$, and let $h_{a^*} = \argmin_{h \in \cht}\err(S,\cht)$. Since $\err(S,h_{a^*}) \leq k/m$, $a^* > x_0$. Denote by $S' \subseteq S$ the set of points queried by the procedure. For any $a > x_0$,
$\err(S',h_a) = \err(S,h_a) + |\{(x,y) \in S\mid x < x_0, y = 1\}|/m.$ Therefore, minimizing the error on $S'$ results in a hypothesis that minimizes the error on $S$.
\end{proof}

To learn from a distribution, one can draw a random sample and use it as the pool in the procedure above. However, the sample size required for passive $(\alpha,\delta)$-learning of thresholds is $\Omega(\ln(1/\eta)/\eta)$. Thus, the number of errors in the pool would be $k = \eta\cdot \Omega(\ln(1/\eta)/\eta) = \Omega(\ln(1/\eta))$, which depends on $\eta$. To avoid this dependence, the auditing algorithm we propose uses \algref{alg:subset} below to select a subset of the random sample, which still represents the distribution well, but its size is only $\Omega(1/\eta)$.

\begin{algorithm}[ht]
 \begin{algorithmic}[1]
  \caption{Representative Subset Selection} \label{alg:subset}
 \STATE {\bf Input:} pool $S = (x_1,\ldots,x_m)$ (with hidden labels), $x_i \in [0,1]$, $\emax \in (0,1]$, $\delta \in (0,1)$.
 \STATE $T \leftarrow \max\{\floor{1/{3\emax}},1\}$.
 \STATE Let $U = \{ \underbrace{x_1, \ldots, x_1}_{T\ \text{copies}}, \ldots, \underbrace{x_m, \ldots, x_m}_{T\ \text{copies}}\}$ be the multiset with $T$ copies of each point in $S$. 
 \STATE Sort and rename the points in $U$ such that $x'_i \leq x'_{i+1}$ for all $i \in [Tm]$.
 \STATE Let $S_q$ be an empty multiset.
 \FOR {$t=1$ to $T$}
 \STATE $S(t) \leftarrow \{x'_{(t-1)m+1},\ldots,x'_{tm}\}$.
 \STATE Draw $14\ln(8/\delta)$ random points from $S(t)$ independently uniformly at random and add them to $S_q$ (with duplications).
 \ENDFOR
 \STATE Return $S_q$ (with the corresponding hidden labels).
\end{algorithmic}
\end{algorithm}

\begin{lemma}
\label{lem:line:subsample}
Let $\delta,\emax \in (0,1)$.  Let $S$ be a pool such that $\err(S,\cht) \le \emax$.  Let $S_q$ be the output of \algref{alg:subset} with inputs $S,\emax,\delta$, and let $\hat{h} = \argmin_{h \in \cht} \err(S_q,\cht)$.  Then with probability $1-\delta$,
\begin{align*}
	\err(S_q,\hat{h}) \leq 6 \emax \quad \text{ and }	\quad
	\err(S, \hat{h}) \leq 17 \emax.
\end{align*}
\end{lemma}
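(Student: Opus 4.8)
The plan is as follows. First dispose of the easy regime: if $\emax\ge\tfrac16$ then $6\emax>1$ and $17\emax>1$, so both inequalities hold trivially; henceforth assume $\emax<\tfrac16$, so $T=\floor{1/(3\emax)}\ge2$ and $\tfrac16<T\emax\le\tfrac13$. I also assume the points of $S$ are distinct (ties need only bookkeeping around block boundaries). The engine of the proof is the block decomposition built into Algorithm~\ref{alg:subset}: since $U$ contains exactly $T$ copies of each point of $S$ and $U=S(1)\sqcup\cdots\sqcup S(T)$ is an ordered partition into consecutive blocks of size $m$, one has $\err(S,h_a)=\err(U,h_a)=\frac1T\sum_{t=1}^{T}\err(S(t),h_a)$ for every threshold $a$. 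Because the blocks are spatially ordered, for each $a$ at most one block straddles $a$; on a block entirely left of $a$ the error of $h_a$ equals that block's fraction of positive labels, and on a block entirely right of $a$ it equals that block's fraction of negative labels. Fix an optimal threshold $a^*$ with $\err(S,h_{a^*})=\err(S,\cht)\le\emax$, sitting in (or between) block(s) of index $s^*$. Substituting into the identity and using $T\emax\le\tfrac13$ shows that the positive-label fractions of the blocks strictly left of $a^*$ sum to at most $\tfrac13$ (symmetrically for the negative-label fractions of the blocks strictly right of $a^*$); this ``saturation'' of the far blocks is exactly what keeps $|S_q|$ small enough.

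For the first inequality, note that $|S_q|\,\err(S_q,h_{a^*})$ is a sum of $nT$ independent $\{0,1\}$ variables (with $n:=14\ln(8/\delta)$ drawn per block), of mean $nT\,\err(S,h_{a^*})\le nT\emax$. The crucial point is $nT\emax=\Theta(n)=\Theta(\ln(1/\delta))$ since $T\emax=\Theta(1)$; hence the multiplicative Chernoff bound $\P[X\ge 6nT\emax]\le(e/6)^{6nT\emax}$ (using that $(e\mu/a)^a$ is increasing in the mean $\mu$, so the upper bound $nT\emax$ may be substituted) is at most $\delta/8$. On this event $\err(S_q,h_{a^*})\le 6\emax$, and since $\hat h$ minimizes $\err(S_q,\cdot)$ over $\cht$ we get $\err(S_q,\hat h)\le\err(S_q,h_{a^*})\le 6\emax$.

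For the second inequality, write $\hat h=h_{\hat a}$ and assume without loss of generality $\hat a<a^*$ (the case $\hat a>a^*$ is symmetric). The classifiers $h_{\hat a}$ and $h_{a^*}$ agree outside $[\hat a,a^*)$, where $h_{\hat a}$ predicts $+1$ and $h_{a^*}$ predicts $-1$, so $|S_q|(\err(S_q,h_{\hat a})-\err(S_q,h_{a^*}))$ equals the number of negatively-labeled $S_q$-points in $[\hat a,a^*)$ minus the number of positively-labeled ones; on the first-inequality event the latter is at most the total number of errors of $h_{a^*}$ on $S_q$, i.e.\ at most $6nT\emax\le 2n$. Now suppose $\hat a$ lies at least four blocks left of $s^*$; then the three blocks immediately left of $s^*$ — a \emph{fixed} set of blocks, not depending on $\hat a$ — lie entirely inside $[\hat a,a^*)$, and by the saturation property their negative-label fractions sum to at least $3-\tfrac13$, so the number of negatively-labeled $S_q$-points they contribute has mean at least $\tfrac83 n$; a single lower-tail Chernoff bound — no union over blocks is needed, precisely because this block set is fixed — puts it above $2n$ with probability $\ge1-\delta/8$. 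Combining, $\err(S_q,h_{\hat a})>\err(S_q,h_{a^*})$, contradicting optimality of $\hat h$. Hence $\hat a$ lies within three blocks of $s^*$, so $h_{\hat a}$ and $h_{a^*}$ disagree on at most four blocks and $\err(S,h_{\hat a})\le\err(S,h_{a^*})+4/T$; since $1/T<\tfrac{51}{14}\emax$ in the relevant range $\emax<\tfrac1{17}$ (outside which $17\emax\ge1$ and the inequality is trivial), this is at most $17\emax$. A union bound over the first-inequality event and the two symmetric Chernoff events gives both conclusions with probability $\ge1-\delta$.

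The delicate point is the constant-chasing in the last paragraph: the number of ``buffer'' blocks adjacent to $s^*$ must be large enough that a single lower-tail Chernoff bound over that fixed set beats $\delta/8$ with only $n=14\ln(8/\delta)$ samples per block, yet small enough that being within that many blocks of $a^*$ already forces excess error at most $16\emax$; these two requirements only barely coexist, which is exactly why one needs the \emph{sum} of the left positive-fractions to be $\le\tfrac13$, not merely each $\le\tfrac13$. Conceptually, the obstacle the block construction evades is that a naive uniform-convergence argument over an i.i.d.\ subsample of the same size $\Theta(\ln(1/\delta)/\emax)$ would incur the VC growth-function factor $\log(1/\emax)$ and yield only $O(\sqrt\emax)$ excess error; stratifying into \emph{saturated} blocks replaces the union bound over the $\Theta(1/\emax)$ candidate thresholds by a union bound over $O(1)$ events. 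It remains only to handle the boundary range $\tfrac1{17}\le\emax<\tfrac16$ and repeated points in $S$, both routine.
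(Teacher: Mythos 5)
Your argument is correct, and it shares the paper's skeleton: the block decomposition $U=S(1)\sqcup\cdots\sqcup S(T)$ with each block contributing at most $1/T$ to the error, the reduction of the second inequality to showing $\hat a$ lies within a constant number of blocks of $a^*$, and the identical endgame $\err(S,\hat h)\le\err(S,h^*)+4/T<17\emax$ using $T>1/4\emax$. The two concentration steps, however, are handled differently. For the first inequality the paper applies Bernstein's inequality to the errors of $h^*$ on $S_q$ (using $\Var[Z]\le\E[Z]\le\emax$); your multiplicative Chernoff bound with mean $nT\emax=\Theta(n)$ is an equivalent route. The more substantive divergence is in the second inequality: the paper compares the block-boundary hypotheses $g_j$ and $g_i$ for every possible separation $\Delta=|j-i|\ge 3$, applying Hoeffding to each and taking a union bound via the geometric sum $\sum_{\Delta\ge3}\exp(-W\Delta/42)$. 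You instead exploit monotonicity: the count of negatively-labeled subsample points in $[\hat a,a^*)$ only grows as $\hat a$ moves left, so a single lower-tail bound on the three \emph{fixed} blocks adjacent to $a^*$ (whose negative fractions sum to at least $3-T\emax\ge 8/3$) simultaneously excludes every candidate threshold four or more blocks away. This eliminates the union bound over $\Delta$, at the price of also invoking the first-inequality event to cap the positive-label count in $[\hat a,a^*)$ by $6nT\emax\le 2n$ — a coupling the paper avoids because its pairwise comparison of $g_j$ with $g_i$ folds both label classes into one Hoeffding application. Both routes yield the stated constants; yours is arguably the cleaner of the two.
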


\begin{algorithm}[ht]
 \begin{algorithmic}[1]
  \caption{Auditing for Thresholds with a constant $\alpha$} \label{alg:thresh}
    \STATE {\bf Input:} $\emax,\delta,\alpha \in (0,1)$, access to distribution $D$ such that $\err(D,\cht) \leq \emax$.
  \STATE $\nu \leftarrow \alpha/5$.
     \STATE Draw a random labeled pool (with hidden labels) $S_0$ of size $m_{\nu}(\eta, \delta/2,1)$ from $D$.
 \STATE Draw a random sample $S$ of size $m^{\mathrm{ag}}((1+\nu)\emax, \delta/2,1)$ uniformly from $S_0$.
 \STATE Get a subset $S_q$ using \algref{alg:subset} with inputs $S,2(1+\nu)\emax,\delta/2$. 
 \STATE Query points in $S_q$ from highest to lowest. Stop after $\ceil{12 |S_q| (1+\nu)\emax} + 1$ negatives.\label{step:sq}
 \STATE Find $\hat{\va}$ such that $h_{\hat{\va}}$ minimizes the error on the labeled part of $S_q$.\label{step:hata}
 \STATE Let $S_1$ be the set of the $36(1+\nu)\emax|S_0|$ closest points to $\hat{\va}$ in $S$ from each side of $\hat{\va}$.\label{step:s1}
 \STATE Draw $S_2$ of size $m^{\mathrm{ag}}(\nu/72,\delta/2,1)$ from $S_1$ (see definition on page \pageref{page:vc}).
 \STATE Query all points in $S_2$, and return $\hat{h}$ that minimizes the error on $S_2$. 
 \end{algorithmic}
\end{algorithm}

The algorithm for auditing thresholds on the line in the agnostic case is listed in \algref{alg:thresh}.
This algorithm first achieves $(C,\delta)$ learning of $\cht$ for a fixed $C$ (in step \ref{step:hata}, based on \lemref{lem:line:subsample} and \lemref{lem:line:pool}, and then improves its accuracy to achieve $(\alpha,\delta)$-learning for $\alpha > 0$, by additional passive sampling in a restricted region. The following theorem provides the guarantees for \algref{alg:thresh}.
     
\begin{theorem}\label{thm:threshalpha}
Let $\emax,\delta,\alpha \in (0,1)$.
Let $D$ be a distribution with error $\err(D,\cht) \le \emax$. 
\algref{alg:thresh} with input $\emax,\delta,\alpha$ has an auditing complexity of $O(\ln(1/\delta)/\alpha^2)$, and  returns $\hat{h}$ such that with probability $1-\delta$, $\err(D,\hat{h}) \leq (1+\alpha)\emax$.
\end{theorem}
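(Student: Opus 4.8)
\
Write $\eta=\err(D,\cht)\le\emax$ and recall $\nu=\alpha/5$. The plan is to split the argument into two essentially independent parts: a counting bound for the auditing complexity, and an accuracy analysis that chains deviation bounds through the nested samples $S_0\supseteq S\supseteq S_1\supseteq S_2$ (with $S_q$ built from $S$ via \algref{alg:subset}). Everything would be conditioned on four ``good'' events, which by a union bound hold simultaneously with probability $1-\delta$: (i) $S_0$ satisfies the relative bounds \eqref{eq:vc} for $D$ with parameters $\nu$ and $\emax$; (ii) $S$ satisfies the agnostic uniform-convergence bound for $S_0$ at scale $(1+\nu)\emax$; (iii) the conclusion of \lemref{lem:line:subsample} holds for $S_q$; (iv) $S_2$ satisfies the agnostic uniform-convergence bound for $S_1$ at scale $\nu/72$.

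\emph{Auditing complexity.} Only Step~\ref{step:sq} and the final query step incur negative labels. Step~\ref{step:sq} halts after $\ceil{12|S_q|(1+\nu)\emax}+1$ negatives, and since \algref{alg:subset} outputs $|S_q|=14\ln(16/\delta)\cdot\max\{\floor{1/(6(1+\nu)\emax)},1\}$, one gets $|S_q|(1+\nu)\emax=O(\ln(1/\delta))$, so this step costs $O(\ln(1/\delta))$ negatives. The final step queries all of $S_2$, and $|S_2|=m^{\mathrm{ag}}(\nu/72,\delta/2,1)=O(\ln(1/\delta)/\nu^2)=O(\ln(1/\delta)/\alpha^2)$. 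Adding these gives the claimed bound, independent of $\emax$; this independence is exactly what is gained by querying the representative subset $S_q$ rather than $S$ itself.

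\emph{Constant-factor stage.} By (i), $\err(S_0,\cht)\le(1+\nu)\emax$ (apply \eqref{eq:vc} to a minimizer of $\err(D,\cdot)$), and by (ii), $\err(S,\cht)\le2(1+\nu)\emax$. Thus \lemref{lem:line:subsample} applies with error parameter $2(1+\nu)\emax$, and under (iii) the ERM $h_{\hat\va}$ on $S_q$ satisfies $\err(S_q,h_{\hat\va})\le12(1+\nu)\emax$ and $\err(S,h_{\hat\va})\le34(1+\nu)\emax$. Since $\err(S_q,\cht)\le12(1+\nu)\emax\le k/|S_q|$ for $k=\ceil{12|S_q|(1+\nu)\emax}$, \lemref{lem:line:pool} applied to the pool $S_q$ shows that minimizing the error over the queried prefix of $S_q$ already yields $h_{\hat\va}$, so Step~\ref{step:hata} does return the ERM on $S_q$.

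\emph{Refinement stage, which I expect to be the hard part.} Pick $a^*$ with $\err(D,h_{a^*})=\eta$; then $\err(S,h_{a^*})\le2(1+\nu)\emax$ by (i)--(ii). The thresholds $h_{\hat\va}$ and $h_{a^*}$ agree off their disagreement interval $J$ (which lies on one side of $\hat\va$), and every point of $S$ in $J$ is misclassified by at least one of them, so $|S\cap J|\le|S|(\err(S,h_{\hat\va})+\err(S,h_{a^*}))=O(\emax)|S|$; the constant in Step~\ref{step:s1} is chosen so that $\mathrm{span}(S_1)$ extends past $a^*$, i.e.\ $a^*\in\mathrm{span}(S_1)$. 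Making this bookkeeping honest is the crux: $S_1$ is selected around the \emph{data-dependent} point $\hat\va$, so passing between counts in $S$, $S_0$ and $D$ must go through uniform convergence over the VC-dimension-$2$ class of intervals, and one must verify that the numerical constants $36$, $72$ (the size of $S_1$) and $\nu/72$ (the scale for $S_2$), together with the prescribed sizes of $S_0$ and $S$, dominate every slack that accrues. Granting this, $\hat h$ --- the ERM on $S_2\subseteq S_1$ --- is a threshold inside $\mathrm{span}(S_1)$, hence agrees with $h_{a^*}$ off their disagreement interval $I\subseteq\mathrm{span}(S_1)$, and $\err(D,\hat h)-\err(D,h_{a^*})$ equals the signed $D$-mass of $I$ (negatives minus positives). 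ERM optimality on $S_2$ makes the corresponding signed count over $S_2$ nonpositive; (iv) then bounds the signed fraction over $S_1$ by $O(\nu)$; since $|S_1|=\Theta(\emax)\cdot|S_0|$ this is a signed $O(\nu\emax)$-fraction of $S_0$ (using (ii) over intervals to lift from $S_1=S\cap\mathrm{span}(S_1)$ to $S_0$); and (i) transfers it to $D$ with one further $O(\nu\emax)$ loss. Altogether $\err(D,\hat h)\le\eta+O(\nu\emax)$, and with $\nu=\alpha/5$ and the constants of \algref{alg:thresh} this is at most $\eta+\alpha\emax\le(1+\alpha)\emax$, as required.
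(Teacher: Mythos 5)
Your proposal follows essentially the same route as the paper's proof. You split into an auditing-complexity count (negatives from Step~\ref{step:sq} plus all of $S_2$), a constant-factor stage built on \lemref{lem:line:subsample} and \lemref{lem:line:pool} (the paper packages this as a separate lemma, \lemref{lem:algthresh}, which additionally records the $85\ln(16/\delta)$ negative-query bound), and a refinement stage via ERM on $S_2$ drawn from the window $S_1$. The one genuine structural difference is your choice of reference hypothesis in the refinement stage: you compare $\hat h$ against the \emph{population} minimizer $h_{a^*}$, whereas the paper compares against the \emph{empirical} minimizer $h^*_0 = \argmin_{h}\err(S_0,h)$. The paper's choice is slightly cleaner: once one shows $h^*_0$ and $h_{\hat\va}$ disagree on at most $36(1+\nu)\emax|S_0|$ points, one can argue that \emph{every} candidate $S_0$-ERM has its threshold inside $\vecspan(S_1)$, so $h^*_0 \in \argmin_h\err(S_1,h)$, and then the exact decomposition $\err(S_0,\hat h) = \frac{|S_0\setminus S_1|}{|S_0|}\err(S_0\setminus S_1,h^*_0) + \frac{|S_1|}{|S_0|}\err(S_1,\hat h)$ finishes; the $D$-level deviation bound is applied only once, at the start and end. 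Your version instead transfers a signed disagreement-mass bound through each level of the nested samples, which multiplies the places where slack accrues.

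You rightly identify the bookkeeping as the crux, and I will flag the one place where it is most likely to break as written: your containment $a^*\in\vecspan(S_1)$ rests on $|S\cap J|\le 36(1+\nu)\emax\,|S|$, but Step~\ref{step:s1} sizes the window by $|S_0|$, not $|S|$, and these sample sizes differ (and in the paper $S_1$ is a subset of $S$ while the proof's final splitting treats it as a subset of $S_0$). Any fully honest version of either argument must reconcile these sizes, so "granting this" is indeed where most of the remaining work lies; the outline is right, the verification is not yet done.
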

It immediately follows that if $\eta = \err(D,\cH)$ is known, $(\alpha,\delta)$-learning is achievable with an auditing complexity that does not depend on $\eta$. This is formulated in the following corollary.
\begin{cor}[$(\alpha,\delta)$-learning for $\cht$]
Let $\eta,\alpha,\delta \in (0,1]$. For any distribution $D$ with error $\err(D,\cht) = \eta$, \algref{alg:thresh} with inputs $\emax=\eta,\alpha,\delta$  $(\alpha,\delta)$-learns $D$ with respect to $\cht$ with an auditing complexity of $O(\ln(1/\delta)/\alpha^2)$.
\end{cor}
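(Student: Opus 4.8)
The corollary is an immediate specialization of \thmref{thm:threshalpha}. The plan is to invoke that theorem with the parameter $\emax$ set equal to the (now assumed known) optimal error $\eta = \err(D,\cht)$. First I would check that the hypothesis of \thmref{thm:threshalpha} holds: with $\emax = \eta$ we have $\err(D,\cht) = \eta \le \emax$, so the theorem applies directly whenever $\eta < 1$; the boundary case $\eta = 1$ is vacuous, since every hypothesis has error at most $1 \le (1+\alpha)\eta$ and the algorithm may return any $\hat h$ with no queries at all.

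Next I would translate the guarantee. \thmref{thm:threshalpha} states that \algref{alg:thresh} run on inputs $\emax,\delta,\alpha$ has auditing complexity $O(\ln(1/\delta)/\alpha^2)$ and returns $\hat h$ with $\err(D,\hat h) \le (1+\alpha)\emax$ with probability $1-\delta$. Substituting $\emax = \eta$ yields auditing complexity $O(\ln(1/\delta)/\alpha^2)$ and $\err(D,\hat h) \le (1+\alpha)\eta$ with probability $1-\delta$, which is precisely the definition of $(\alpha,\delta)$-learning $D$ with respect to $\cht$ as introduced for the agnostic case (recall $\epsilon = \alpha\eta$, so $(1+\alpha)\eta = \eta + \epsilon$). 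This completes the argument.

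There is no genuine obstacle here: all the technical content is already packaged in \thmref{thm:threshalpha} (which in turn rests on \lemref{lem:line:pool} and \lemref{lem:line:subsample}). The only points worth stating explicitly are that feeding the algorithm $\emax = \eta$ is legitimate exactly because $\eta$ is assumed known, and that the complexity bound of the theorem already exhibits no dependence on $\eta$, so the removal of the $\eta$-dependence claimed in the table is manifest once the substitution $\emax = \eta$ is made.
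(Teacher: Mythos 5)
Your proposal is correct and matches the paper's treatment: the paper derives this corollary as an immediate specialization of \thmref{thm:threshalpha} by setting $\emax = \eta$, exactly as you do. Your additional checks (the hypothesis $\err(D,\cht) \le \emax$ holds with equality, and the guarantee $(1+\alpha)\eta$ matches the definition of $(\alpha,\delta)$-learning) are the right details to make explicit.
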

A similar result holds if the error is known up to a multiplicative constant. But what if no bound on $\eta$ is known? The following lower bound shows that in this case, the best active complexity for threshold this similar to the best active label complexity. 

\begin{theorem}[Lower bound on auditing $\cht$ without $\emax$]\label{thm:lowerthresh}
Consider any constant $\alpha \geq 0$. For any $\delta \in (0,1)$, if an auditing algorithm $(\alpha,\delta)$-learns any distribution $D$ such that $\err(D,\cht) \ge \emin$, then the algorithm's auditing complexity is  $\Omega(\ln(\frac{1-\delta}{\delta})\ln(1/\emin))$.
\label{thm:thresh:noemax}
\end{theorem}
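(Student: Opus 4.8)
The plan is to build, for every $\emin$, a family of distributions $\{D_b : b\in\{0,1\}^L\}$ with $L=\lceil\log_2(1/\emin)\rceil$, each with $\err(D_b,\cht)\ge\emin$, and show that any $(\alpha,\delta)$-learner has auditing complexity $\Omega(\ln\tfrac{1-\delta}{\delta}\cdot L)$ on $D_b$ for a worst-case $b$ (fixed afterwards by Yao's principle). All $D_b$ share one marginal on $\cX=[0,1]$, so the unlabeled pool reveals nothing about $b$: at every dyadic rational $0.v1$ with $|v|<L$ we put a cluster of mass $w=2^{-L}$ (there are $2^L-1$ of them), and the remaining mass $2^{-L}$ sits at the point $1$ labeled $+1$. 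The threshold of $D_b$ is $a_b=0.b_1b_2\cdots b_L+2^{-L-1}$. Every cluster off the path of $a_b$ carries a deterministic label, correct for $h_{a_b}$ (that is, $+1$ if the cluster lies above $a_b$ and $-1$ if below). The \emph{$\ell$-th on-path cluster} sits at $\mu_\ell=0.b_1\cdots b_{\ell-1}1$ (note $\mu_\ell\le a_b$ iff $b_\ell=1$) and its points are noisy, with $\Pr[Y=-1]=\tfrac12+\gamma$ when $b_\ell=1$ and $\tfrac12-\gamma$ when $b_\ell=0$, for a constant $\gamma$ (say $\tfrac14$) -- i.e.\ it is mostly negative exactly when $b_\ell=1$. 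A short computation gives $\err(D_b,\cht)=\err(D_b,h_{a_b})=Lw(\tfrac12-\gamma)$, which lies in $[\emin,\tfrac18]$.

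The argument has three parts. \emph{(i) The learner must recover a long prefix of $b$.} If the returned hypothesis has a threshold $a$ that first disagrees with $a_b$ at bit $\ell$, then $a$ lies outside the scale-$(\ell-1)$ dyadic interval of $a_b$, so every one of the $\Theta(2^{L-\ell})$ clusters strictly between $a$ and $a_b$ is mislabeled; the off-path ones among them are deterministic, each contributing a full $w=2^{-L}$ to the error, so the excess error is $\Omega(2^{L-\ell}\cdot 2^{-L})=\Omega(2^{-\ell})$. Since the allowed slack is $\alpha\cdot\err(D_b,\cht)=\Theta(\alpha L\,2^{-L})$, this is violated whenever $\ell<L-\log_2 L-O(1)$, so a $(1+\alpha)$-accurate output (with probability $1-\delta$) forces the first $L-O(\ln L)=\Theta(L)$ bits of $b$ to be determined. \emph{(ii) The bits must be learned sequentially, each through a costly cluster.} The label distribution of a cluster depends on $b_\ell$ only if the cluster lies in the scale-$(\ell-1)$ dyadic interval of $a_b$, which the learner can locate only after it knows $b_1,\ldots,b_{\ell-1}$; and every such cluster is, when $b_\ell=1$, either deterministically negative or noisily mostly-negative. \emph{(iii) Each round is expensive.} Distinguishing ``mostly negative'' from ``mostly positive'' with the reliability the global guarantee demands costs $\Omega(\ln\tfrac{1-\delta}{\delta}/\gamma^2)$ queries to the relevant cluster -- the sequential-probability-ratio-test lower bound, whose exact rate is $\ln\tfrac{1-\delta}{\delta}$ divided by the KL divergence $\Theta(\gamma^2)$ -- and when $b_\ell=1$ a constant fraction of those queried labels return $-1$. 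For $b$ drawn uniformly, a constant fraction of the $\Theta(L)$ forced bits have $b_\ell=1$, so the auditing complexity is $\Omega(\tfrac{L}{\gamma^2}\ln\tfrac{1-\delta}{\delta})=\Omega(\ln\tfrac{1-\delta}{\delta}\cdot\ln\tfrac1{\emin})$.

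The main obstacle is making (ii) and (iii) rigorous against an adaptive learner and ruling out cheaper strategies -- in particular, scanning from the highest point, which does not circumvent the cost because above $a_b$ the $\Theta(L)$ noisy on-path clusters still trigger that many ``false alarms,'' each demanding an $\Omega(\ln\tfrac{1-\delta}{\delta})$-query investigation that incurs negatives. Turning this into a proof calls for a potential/martingale argument showing that, conditioned on the transcript at the moment bits $1,\ldots,\ell-1$ have been pinned down, the learner must expend $\Omega(\ln\tfrac{1-\delta}{\delta}/\gamma^2)$ \emph{fresh} negative queries to determine a ``$b_\ell=1$'' bit, and that these per-stage costs accumulate over the $\Theta(L)$ stages without the confidence budget being reused. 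Verifying the optimality claim $\err(D_b,\cht)=\err(D_b,h_{a_b})$ and bookkeeping the (harmless) overlaps between clusters at different scales complete the proof; part (i) is routine.
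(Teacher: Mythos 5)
Your construction has a concrete flaw that the paper's construction is specifically engineered to avoid, and the missing factor $\ln\tfrac{1-\delta}{\delta}$ never materializes.

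Here is the problem with parts (ii)--(iii). Your claim that every cluster in the scale-$(\ell-1)$ dyadic interval whose label distribution depends on $b_\ell$ is ``either deterministically negative or noisily mostly-negative'' when $b_\ell=1$ is false: consider querying the cluster $c_\ell = 0.b_1\cdots b_{\ell-1}11$. If $b_\ell=0$ then $c_\ell > a_b$ and it is off-path, hence deterministically $+1$; if $b_\ell=1$ then $c_\ell = \mu_{\ell+1}$ is on-path, hence noisy with $\Pr[Y=-1]\in\{\tfrac12\pm\gamma\}$. An algorithm can therefore determine $b_\ell$ by querying $c_\ell$ repeatedly: the first $-1$ proves $b_\ell=1$ \emph{with certainty} (incurring exactly one negative), and $O(\ln(L/\delta))$ consecutive $+1$'s establish $b_\ell=0$ with per-bit failure probability $\le \delta/L$ while incurring \emph{zero} negatives. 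Iterating over $\ell$ recovers the required $\Theta(L)$ bits at total auditing cost $O(L)=O(\ln(1/\emin))$, not $\Omega(\ln\tfrac{1-\delta}{\delta}\ln\tfrac1{\emin})$. The SPRT lower bound you invoke requires both alternatives to be noisy; the determinism of the off-path clusters lets a single negative query short-circuit it. The tension is structural: to keep $\eta=\Theta(\emin)$ small you must make the deterministic clusters exactly match $h_{a_b}$, but then the boundary deterministic clusters leak one bit per negative query.

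The paper takes a different route that is built precisely to block this shortcut. It uses one ``ghost'' distribution $D_-$ that is all-negative below a threshold $a_0$, together with $N=\Theta(\ln(1/\emin))$ alternatives $D_1,\dots,D_N$ at geometrically spaced scales; $D_i$ agrees with $D_-$ everywhere except on a single interval $(a_i,a_{i-1})$ where a query is $+1$ with probability $1-\lambda$ \emph{and $-1$ with probability $\lambda>0$}. Because no query below $a_0$ ever returns a deterministic $+1$, the algorithm cannot gain certainty from a bounded number of queries to any one interval: under $D_j$ the chance of seeing only negatives after $k$ probes to $(a_j,a_{j-1})$ is $\approx\lambda^k$, which must be driven below $\delta/(1-\delta)$. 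Choosing the least-probed index $j$ gives $\P[\text{all }M\text{ queries negative}]\ge\exp(-cM/N)$ and hence $M=\Omega(N\ln\tfrac{1-\delta}{\delta})$. This is where the $\ln\tfrac{1-\delta}{\delta}$ factor comes from, and your construction has no analogous mechanism: it needs the residual negative-probability $\lambda$ on the ``should-be-positive'' side, exactly the ingredient your deterministic off-path clusters lack.
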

Nonetheless, in the next section show that there are classes with a significant gap between active and auditing complexities even without an upper bound on the error.

\section{Axis Aligned Rectangles \label{sec:aprs}}

A natural extension of thresholds to higher dimension is the class of
axis-aligned rectangles, in which the labels are determined by a
$d$-dimensional hyperrectangle.  This hypothesis class, first
introduced in \citet{BlumerEhHaWa89}, has been studied extensively in
different regimes \citep{kearns1998efficient,long1998pac}, including
active learning \citep{hanneke2007teaching}.  An
axis-aligned-rectangle hypothesis is a disjunction of $2d$ thresholds.
For simplicity of presentation, we consider here the slightly simpler
class of disjunctions of $d$ thresholds over the positive orthant
$\mathbb{R}^{d}_+$. 
It is easy to reduce learning of an
axis-aligned rectangle in $\reals^d$ to learning of a disjunction of thresholds in $\reals^{2d}$, by mapping each point $\mbf{x} \in \reals^d$ to a point $\tilde{\mbf{x}} \in \reals^{2d}$ such that for $i \in [d]$, $\tilde{x}[i] = \max(x[i],0)$ and $\tilde{x}[i+d] = \max(0,-x[i]))$.
Thus learning the class of disjunctions is equivalent, up to a factor of two in the dimensionality, to learning rectangles.\footnote{This reduction suffices if the origin is known to be in the rectangle. Our algorithms and results can all be extended to the case where rectangles are not required to include the origin. To keep the algorithm and analysis as simple as possible, we state the result for this special case.} 
Because auditing costs are asymmetric, we consider two possibilities
for label assignment.  For a vector $\va = (a[1],\ldots,a[d]) \in
\reals_+^d$, define the hypotheses $h_\va$ and $h_\va^-$ by
	\begin{align*}
	h_\va(x) = 2 \one[\exists i\in [d], x[i] \geq a[i]] - 1,\quad\text{ and } \quad h_\va^{-}(x) = - h_\va(x).
	\end{align*}
Define $\chr = \{h_\va \mid \va \in \reals_+^d\}$ and $\chr^- = \{h^{-}_\va \mid \va \in \reals_+^d\}$. In $\chr$ the positive points are outside the rectangle and in $\chr^-$ the negatives are outside. Both classes have VC dimension $d$. All of our results for these classes can be easily extended to the corresponding classes of general axis-aligned rectangles on $\reals^d$, with at most a factor of two penalty on the auditing complexity.

\subsection{The Realizable Case}\label{sec:aprpool}

We first consider the pool setting for the realizable case, and show a sharp contrast between the auditing complexity and the active label complexity for $\chr$ and $\chr^-$. Assume a pool of size $m$.
While the active learning complexity for $\chr$ and $\chr^-$ can be as large as $m$,
the auditing complexities for the two classes are quite different. 
For $\chr^-$, the auditing complexity can be as large as $m$, but for $\chr$ it is at most $d$. 
We start by showing the upper bound for auditing of $\chr$.

\begin{theorem}[Pool auditing upper bound for $\chr$] \label{thm:aprnoiseless}
The auditing complexity of any unlabeled pool $S_u$ of size $m$ with respect to $\chr$ is at most $d$.
\end{theorem}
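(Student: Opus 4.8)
The plan is to exhibit a simple coordinate-wise scanning procedure and prove it both correct and cheap. For $i = 1, \ldots, d$ in turn, the algorithm sorts the pool points by their $i$-th coordinate in decreasing order and queries them in that order (caching and reusing any label already revealed in an earlier round), stopping as soon as it encounters a point with a negative label; let $M_i$ denote the $i$-th coordinate of that point. If during this entire process no negative point is ever encountered, the algorithm returns the all-positive hypothesis $h_{\mathbf 0}$. Otherwise it returns $h_{\hat\va}$, where for each $i$ the threshold $\hat a[i]$ is chosen strictly between $M_i$ and $\min\{x[i] : x \in S_u,\ x[i] > M_i\}$, taking $\hat a[i] = M_i + 1$ when this set is empty.

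First I would bound the auditing complexity. The key observation is that any pool point whose $i$-th coordinate strictly exceeds $M_i$ is positive: a negative such point would have $i$-th coordinate larger than $M_i$, which by construction is the largest $i$-th coordinate among all negative pool points, a contradiction. Consequently, in the $i$-th scan every point queried strictly before the stopping point is positive, so that scan queries at most one negative point; summing over the $d$ coordinates (and recalling that no point is queried twice, since labels are cached) yields an auditing complexity of at most $d$.

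Next I would verify $\err(S_u, h_{\hat\va}) = 0$. If the pool contains no negative point then $h_{\mathbf 0}$ labels every point of $\reals_+^d$ positive and is trivially consistent, so assume at least one negative point exists; then each $M_i$ is well defined. Since the pool is realizable there is $h_{\va^*} \in \chr$ consistent with it, and any negative point $y$ has $y[j] < a^*[j]$ for all $j$, so in particular $M_i < a^*[i]$ for every $i$. For the negative direction, every negative point $y$ satisfies $y[i] \le M_i < \hat a[i]$ for all $i$, hence $h_{\hat\va}(y) = -1$. For the positive direction, a positive point $z$ has $z[i^*] \ge a^*[i^*]$ for some coordinate $i^*$; combined with $a^*[i^*] > M_{i^*}$ this gives $z[i^*] > M_{i^*}$, so $z[i^*]$ lies in the set $\{x[i^*] : x \in S_u,\ x[i^*] > M_{i^*}\}$ and therefore $z[i^*] \ge \min\{x[i^*] : x \in S_u,\ x[i^*] > M_{i^*}\} > \hat a[i^*]$, whence $h_{\hat\va}(z) = +1$. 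Thus $h_{\hat\va}$ agrees with the labeling everywhere on $S_u$.

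The only delicate point is this last step: the thresholds $\hat a[i]$ must be large enough to keep every negative point inside the box (forcing $\hat a[i] > M_i$) yet small enough to keep every positive point out (forcing $\hat a[i^*]$ below $z[i^*]$ for the coordinate $i^*$ witnessing each positive $z$), and these two requirements are compatible precisely because $z[i^*] > M_{i^*}$ strictly. I do not anticipate any obstacle beyond handling this choice and the degenerate no-negatives case carefully.
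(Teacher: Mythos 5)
Your proof is correct and follows essentially the same coordinate-wise scanning procedure as the paper's. In fact your placement of $\hat a[i]$ strictly above $M_i$ is slightly more careful than the paper's choice $a[i]=x_i[i]$, which, given the $\geq$ in the definition of $h_\va$, would nominally flag the stopping negative point as positive.
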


\begin{proof}
The method is a generalization of the approach to auditing for thresholds. Let $h^* \in \chr$ such that $\err(S,h^*) = 0$. For each $i \in [d]$, order the points $x$ in $S$ by the values of their $i$-th coordinates $x[i]$. Query the points sequentially from largest value to the smallest (breaking ties arbitrarily) and stop when the first negative label is returned, for some point $\mbf{x}_i$.  Set $a[i] \leftarrow x_i[i]$, and note that $h^*$ labels all points in $\{ \mbf{x} \mid x[i] > a[i] \}$ positive.  Return the hypothesis $\hat{h} = h_\va$. This procedure clearly queries at most $d$ negative points and agrees with the labeling of $h^*$.
\end{proof}

It is easy to see that for full Axis-Aligned Rectangles, an auditing complexity of $2d$ can be achieved in a similar fashion. We now show the lower bound for the auditing complexity of $\chr^-$. 
This immediately implies the same lower bound for active label complexity of $\chr^-$ and $\chr$.

\begin{theorem}[Pool auditing lower bound for $\chr^-$] \label{thm:aprlower}
For any $m$ and any $d \geq 2$, there is a pool $S_u \subseteq \mathbb{R}^{d}_+$ of size $m$ such that its auditing complexity with respect to $\chr^-$ is $m$.
\end{theorem}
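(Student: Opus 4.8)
The plan is to build a pool of $m$ pairwise-incomparable points and show that an adversary who answers ``negative'' to every query can force any correct algorithm to query all $m$ of them.

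First I would place the points on an anti-diagonal in the first two coordinates: for $j\in[m]$ set $x_j=(j,\,m+1-j,\,0,\dots,0)\in\reals_+^d$ (this is where $d\ge 2$ is used), and let $S_u=\{x_1,\dots,x_m\}$. Recall $h^-_\va(x)=+1$ exactly when $x[i]<a[i]$ for every $i\in[d]$. Since coordinates $3,\dots,d$ of each $x_j$ are zero, for any $\va$ with $a[i]>0$ for $i\ge 3$ the value $h^-_\va(x_j)$ depends only on whether $j<a[1]$ and $m+1-j<a[2]$, which reduces everything to the planar case. In particular: the all-negative labeling $\ell_0$ is realized by $\va=(\tfrac{1}{2},\tfrac{1}{2},1,\dots,1)$ (so the pool is a genuine realizable instance); and for each $j\in[m]$ the labeling $\ell_j$ that marks $x_j$ positive and all other points negative is realized by $\va=(j+\tfrac{1}{2},\,m+1-j+\tfrac{1}{2},\,1,\dots,1)$, since then $x_k$ is positive iff $k\le j$ and $k\ge j$, i.e.\ iff $k=j$. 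These $m+1$ labelings are all I need.

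Now fix any auditing algorithm that outputs $\hat h$ with $\err(S_u,\hat h)=0$ with probability $1$ and condition on a realization of its randomness; as there are only $m+1$ labelings of interest, for almost every realization the induced deterministic algorithm is correct on all of $\ell_0,\ell_1,\dots,\ell_m$. Run such a deterministic algorithm against the true labeling $\ell_0$, and let $Q$ be the set of points it queries before halting. If $Q\neq S_u$, choose an unqueried point $x_j$; since $x_j\notin Q$, the labeling $\ell_j$ answers every query made on $Q$ with $-1$, exactly as $\ell_0$ does. An adaptive deterministic algorithm whose queries and halting depend only on the answers received therefore produces the identical transcript and identical output $\hat h$ when run against $\ell_j$ --- but then $\hat h(x_j)$ would have to equal both $\ell_0(x_j)=-1$ and $\ell_j(x_j)=+1$, a contradiction. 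Hence $Q=S_u$: the algorithm queries all $m$ points, and since $\ell_0$ labels every point negative, it incurs $m$ negative queries. This holds for almost every realization of the randomness, so the auditing complexity is at least $m$, and the trivial bound of $m$ (query everything) gives equality.

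I do not expect a real obstacle. The single idea is the anti-diagonal construction, which simultaneously makes every point individually ``realizably positive'' and lets the adversary keep answering negative; verifying realizability of $\ell_0$ and the $\ell_j$, handling the harmless padding coordinates for $d>2$, and running the indistinguishability argument are all routine. The one place to be slightly careful is the reduction from randomized to deterministic over the finitely many relevant labelings, but that is a one-line measure argument.
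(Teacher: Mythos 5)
Your proof is correct and takes essentially the same approach as the paper: both construct a pool of $m$ pairwise-incomparable points in the positive orthant (you use an antidiagonal $(j, m+1-j, 0,\dots,0)$, the paper uses an arc of the unit circle $(\cos\theta_j,\sin\theta_j)$ following Dasgupta's hyperplane lower bound) so that the all-negative labeling and each one-point-positive labeling are realizable, and then run the standard adversary-indistinguishability argument. Your writeup is in fact more explicit than the paper's about the randomized-to-deterministic reduction, but the underlying idea is the same.
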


\begin{proof}
The construction is a simple adaptation of a construction due to \citet{Dasgupta04}, originally showing an active learning lower bound for the class of hyperplanes. Let the pool be composed of $m$ distinct points on the intersection of the unit circle and the positive orthant: $S_u = \{ (\cos \theta_{j}, \sin \theta_{j})\}$ for distinct $\theta_{j} \in [0, \pi/2]$. Any labeling which labels all the points in $S_u$ negative except any one point is realizable for $\chr^-$, and so is the all-negative labeling. Thus, any algorithm that distinguishes between these different labelings with probability $1$ must query all the negative labels. 
\end{proof}

\begin{cor}[Realizable active label complexity of $\chr$ and $\chr^{-}$]
For $\chr$ and $\chr^-$, there is a pool of size $m$ such that its active label complexity is $m$.
\end{cor}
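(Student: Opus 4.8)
The plan is to read this off directly from Theorem~\ref{thm:aprlower}, using two elementary observations.

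First, for any unlabeled pool the active label complexity is at least the auditing complexity: for a fixed algorithm, the number of queries it makes on any input is at least the number of those queries that return a negative label, so its worst‑case query count dominates its worst‑case negative‑query count; minimizing over algorithms gives the inequality between the two pool complexities. Applying this to the pool $S_u$ of Theorem~\ref{thm:aprlower}, whose auditing complexity with respect to $\chr^-$ is $m$, its active label complexity is at least $m$, and hence exactly $m$ since $S_u$ has only $m$ points. This handles $\chr^-$.

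Second, for $\chr$ I would use the same construction. One can either invoke symmetry — $\chr = \{-h \mid h \in \chr^-\}$, and the information a query reveals is unchanged when all labels are globally flipped, so the active label complexity of a pool is identical for the two classes, and the $\chr$ statement follows from the $\chr^-$ one — or argue directly: take $S_u = \{(\cos\theta_j,\sin\theta_j)\}_{j\in[m]}$ with distinct $\theta_j\in[0,\pi/2]$, placed in the first two coordinates and $0$ in the rest ($d\ge 2$). The all‑positive labeling is realized by $h_{\mathbf 0}$, whose negative region $\{x:\forall i,\ x[i]<0\}$ is empty on the positive orthant. Ordering the $\theta_j$ increasingly, so that along the pool the first coordinate decreases and the second increases, the labeling in which $x_j$ is the unique negative point is realized by $h_\va$ with $a[1]$ strictly between $\cos\theta_j$ and $\cos\theta_{j-1}$ and $a[2]$ strictly between $\sin\theta_j$ and $\sin\theta_{j+1}$ (with the obvious modifications at the two ends, and $a[i]=0$ for $i\ge 3$): then $x_j$ lies in the negative box $\prod_i [0,a[i])$, every earlier point is excluded by its first coordinate, and every later one by its second. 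Given that these $m+1$ labelings are all realizable by $\chr$, any algorithm that always returns a zero‑error hypothesis must query every point of $S_u$: omitting the query on $x_j$ when the true labeling is all‑positive produces a transcript indistinguishable from the one obtained under the labeling isolating $x_j$, on which a different hypothesis is required. Hence the active label complexity of $S_u$ with respect to $\chr$ is also $m$.

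There is essentially no obstacle here; the corollary just repackages Theorem~\ref{thm:aprlower}. The only mildly technical point is verifying that the ``isolate one point'' labelings are realizable by $\chr$ on this pool, which is the exact mirror image of the realizability check already carried out for $\chr^-$ in the proof of Theorem~\ref{thm:aprlower}.
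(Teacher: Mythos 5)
Your proposal takes essentially the same route the paper intends: the paper states this corollary as an immediate consequence of Theorem~\ref{thm:aprlower} (the sentence preceding that theorem reads ``This immediately implies the same lower bound for active label complexity of $\chr^-$ and $\chr$''), and you simply supply the two steps it compresses --- namely that auditing complexity lower-bounds active label complexity, and that the construction transfers from $\chr^-$ to $\chr$. Your symmetry argument ($\chr = \{-h \mid h \in \chr^-\}$, and a global label flip leaves the information content of every query transcript unchanged) is clean, correct, and the preferable way to do the transfer.

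One small slip in your alternative direct argument for $\chr$: you take the pool points to have zero in coordinates $i\ge 3$ and set $a[i]=0$ there, but then the negative region $\{x : \forall i,\ x[i] < a[i]\}$ requires $x[i] < 0$ in those coordinates and is therefore empty on $\reals_+^d$; $h_\va$ would label every point $+1$ rather than isolate $x_j$. You need $a[i] > 0$ for $i \ge 3$ (say $a[i]=1$), so that the zero coordinates of $x_j$ land strictly inside. This is harmless --- the symmetry argument already closes the case --- and, for what it is worth, the paper's own two-dimensional construction in the proof of Theorem~\ref{thm:aprlower} elides the same point when $d>2$.
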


\subsection{The Agnostic Case}

We now consider $\chr$ in the agnostic case, where $\eta > 0$. The best known algorithm for active learning of rectangles $(2,\delta)$-learns a very restricted class of distributions (continuous product distributions which are sufficiently balanced in all directions) with an active label complexity of $\tilde{O}(d^3 p(\ln(1/\eta) p(\ln(1/\delta)))$, where $p(\cdot)$ is a polynomial \citep{hanneke2007teaching}. 
However, for a general distribution, active label complexity cannot be significantly better than passive label complexity. This is formalized in the following theorem.

\begin{theorem}[Agnostic active label complexity of $\chr$]
\label{thm:apr:ag:lower}
Let $\alpha,\eta > 0, \delta \in (0,\half)$. Any learning algorithm that $(\alpha,\delta)$-learns all distributions such that $\err(D,\cH) = \eta$ for $\eta > 0$ with respect to $\chr$ has an active label complexity of $\Omega(d/\eta)$.
\end{theorem}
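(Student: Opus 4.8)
\emph{Proof plan.}
The plan is to prove the lower bound by a reduction to a search problem: for any algorithm that $(\alpha,\delta)$-learns $\chr$ with respect to every $D$ with $\err(D,\chr)=\eta$, I exhibit a hard distribution on which it must issue $\Omega(d/\eta)$ label queries. The construction exploits that each $h_\va\in\chr$ acts on each coordinate through an independent threshold. Split the $d$ coordinates into $d'=\lfloor d/2\rfloor$ disjoint pairs and build an independent ``hidden needle'' sub-instance on the $m$-th pair: place $N=\Theta(1/\eta)$ points $v^{(m)}_1,\dots,v^{(m)}_N$ along an arc that increases in one coordinate of the pair and decreases in the other (and is $0$ in all remaining coordinates), so that, as in the proof of \thmref{thm:aprlower}, the subsets of these points that some $h_\va\in\chr$ labels negative are exactly the intervals in the index order; in particular ``all of them positive'' and ``exactly $v^{(m)}_j$ negative'' are each realizable, for every $j$. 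Give each of the $d'N$ points equal marginal mass $w$ (leftover mass, if any, goes to a single clearly-$+1$ anchor point far out). Draw $\sigma_m\in[N]$ uniformly and let $v^{(m)}_{\sigma_m}$ be labeled $-1$ with probability $\tfrac12+\gamma$ and $+1$ otherwise, every other point being $+1$ deterministically, where $\gamma\in(0,\tfrac12)$ is a constant of order $\alpha/(1+\alpha)$. This gives a family $\{D_{\vec\sigma}\}$ with a common marginal.

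Two calibrations then have to be checked. First, since only needles create error, $\err(D_{\vec\sigma},\chr)=d'w(\tfrac12-\gamma)$, so choosing $w=\eta/(d'(\tfrac12-\gamma))=\Theta(\eta/d)$ makes this exactly $\eta$; the unique optimum is the box isolating $v^{(m)}_{\sigma_m}$ inside each pair (degenerate in the unresolved pairs), and abandoning one sub-instance (declaring that pair entirely positive) raises the error by $2w\gamma=\Theta(\alpha\eta/d')$. Hence a learner meeting $\err(D,\hat h)\le(1+\alpha)\eta$ can abandon at most $d'/2$ sub-instances, so with probability $\ge1-\delta$ it ``resolves'' (names the correct $\sigma_m$ for) at least $d'/2$ of them. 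Second, resolving sub-instance $m$ forces querying $v^{(m)}_{\sigma_m}$: before that point is queried every response from that pair is the deterministic label $+1$, so the algorithm's behaviour on the pair is independent of $\sigma_m$, and for $\sigma_m$ uniform the probability of naming the correct needle without ever querying it is $O(1/N)$. A standard needle-search argument then yields $\E[Q_m]\ge\bigl(\Pr[\text{resolve }m]-O(1/N)\bigr)\cdot\Omega(N)$, where $Q_m$ counts queries to the $m$-th pair's points.

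These combine by averaging over $\vec\sigma\sim\mathrm{Unif}([N]^{d'})$. The sub-instances lie on disjoint coordinate pairs, so their point sets are disjoint and $\sum_m Q_m$ lower-bounds the total number of queries. Since with probability $\ge1-\delta$ at least $d'/2$ sub-instances are resolved, $\sum_m\Pr[\text{resolve }m]\ge(1-\delta)d'/2$; plugging into the needle bound, the expected total number of queries is $\Omega\!\bigl((1-\delta)d'N\bigr)=\Omega(d/\eta)$ for $\delta<\tfrac12$. Being an expectation over uniform $\vec\sigma$, it is attained by some $D_{\vec\sigma}$ in the family, giving the claim. Routine adjustments then handle general axis-aligned rectangles (via the reduction in this section) and make $\err(D,\chr)$ equal $\eta$ on the nose.

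The crux is the joint calibration that makes the instance simultaneously \emph{hard} and \emph{unavoidable}. The $\Omega(1/\eta)$ search cost per sub-instance needs $\Theta(1/\eta)$ mutually indistinguishable candidate points, which is only affordable because each point carries the tiny mass $\Theta(\eta/d)$ --- but that same tiny mass makes abandoning a single sub-instance almost free ($\Theta(\alpha\eta/d)$ excess error), so it is not a priori clear abandonment is ever forced. The resolution is that there are $\Theta(d)$ sub-instances, so abandoning a constant fraction of them costs $\Theta(\alpha\eta)$, which does exceed the allowed slack $\alpha\eta$ once $\gamma$ is a suitable constant multiple of $\alpha/(1+\alpha)$. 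Making the whole chain of constants ($w$, $\gamma$, $N$, the forced fraction of resolutions) mutually consistent, and verifying that the free unlabeled sample genuinely cannot help locate the needles in advance (the marginal is common to the whole family, so unlabeled data is uninformative), is where the care is required.
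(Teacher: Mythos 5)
Your argument is correct and follows the same core plan as the paper's: split the coordinates into $\Theta(d)$ disjoint pairs, plant an independent needle-search instance with $\Theta(1/\eta)$ points on the arc in each pair, observe that meeting the $(1+\alpha)\eta$ guarantee forces a constant fraction of the pairs to be resolved, and average to get $\Omega(d/\eta)$ queries. The one substantive difference is where the $\eta$ error lives. The paper keeps the per-pair needles noiseless, proves a realizable $\Omega(d/\epsilon)$ bound, and then states that the agnostic version ``follows by adding an unavoidable error of $\eta$ to all distributions'' without constructing that distribution. You instead build the agnostic instance directly by putting the noise on the needle (label $-1$ with probability $\tfrac12+\gamma$) and co-calibrating $w$ and $\gamma$ so that $\err(D_{\vec\sigma},\chr)=\eta$ exactly while abandoning $\Theta(d')$ pairs overshoots $\alpha\eta$. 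This is a nice gain: you get an explicit family with a literally common marginal (so the unlabeled data is trivially uninformative), and your label orientation (one $-1$ needle in a sea of $+1$'s) is the one actually realizable for $\chr$, whereas the paper's text reuses the $\chr^-$ arc construction (fine for \emph{active} label complexity, which is label-symmetric, but stated with the labels flipped). The cost is that your needle search is noisy: querying the needle no longer reveals it with certainty, so the step $\E[Q_m]\ge\bigl(\Pr[\text{resolve }m]-O(1/N)\bigr)\cdot\Omega(N)$ needs the (straightforward, but currently only implicit) observation that conditioned on no $-1$ having yet appeared in pair $m$ the transcript in that pair is determined, a $-1$ appears with probability at most (distinct points of pair $m$ queried)$/N$, and absent a $-1$ the algorithm's final labeling of pair $m$ is a fixed function of the rest of the transcript and hence matches the uniform $\sigma_m$ with probability $1/N$. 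With that spelled out the bound goes through, and one should also note explicitly (you hint at it) that with $\chr$ any mislabeling other than exactly $\{\sigma_m\}$ or ``all $+1$'' incurs at least as much excess error as abandoning, so ``resolve or abandon'' is WLOG.
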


In contrast, the auditing complexity of $\chr$ can be much smaller, as we show for \algref{alg:aprs} below.

\begin{algorithm}[ht]
 \begin{algorithmic}[1]
  \caption{Auditing for $\chr$} \label{alg:aprs}
 \STATE {\bf Input:} $\emin > 0$, $\alpha \in (0,1]$, access to distribution $D$ over $\reals_+^d \times \binlab$.
 \STATE $\nu \leftarrow \alpha/25$.
 \FOR {$t=0$ to $\floor{\log_2(1/\emin)}$}
 \STATE $\eta_t \leftarrow 2^{-t}$.
 \STATE Draw a sample $S_t$ of size $m_\nu(\eta_t,\delta/\log_2(1/\emin),10d)$ with hidden labels.
 \FOR{ $i= 1$ to $d$}\label{step:starti}
 \STATE $j \leftarrow 0$
 \WHILE{ $j \leq \ceil{(1+\nu)\eta_t|S_t|}+1$}
 \STATE If unqueried points exist, query the unqueried point with highest $i$'th coordinate;
 \STATE If query returned $-1$, $j \leftarrow j + 1$.
 \ENDWHILE
 \STATE $b_t[i] \leftarrow $ the $i$'th coordinate of the last queried point, or $0$ if all points were queried.
	\ENDFOR\label{step:endi}
 \STATE Set $S_{\vb_t}$ to $S_t$, with unqueried labels set to $-1$.
 \STATE $V_t \leftarrow V_{\nu}(S_{\vb_t}, \eta_t, \chr[\vb_t])$. \label{step:vt}
 \STATE $\hat{\eta}_t \leftarrow \max_{h \in V_t} \errneg(S_{\vb_t},h)$.
 \IF{ $\hat{\eta}_t > \eta_t/4$} \label{step:if} 
 \STATE Skip to step \ref{step:return}
 \ENDIF
 \ENDFOR
\STATE Return $\hat{h} \equiv \argmin_{h\in \chr[\vb_t]} \err(S_{\vb_t}, h)$.\label{step:return}
\end{algorithmic}
\end{algorithm}

\begin{theorem}[Auditing complexity of $\chr$]  \label{thm:aprmain}
For $\emin,\alpha,\delta \in (0,1)$, \algref{alg:aprs} $(\alpha,\delta)$-learns all distributions with $\eta \geq \emin$ with respect to $\chr$ with an auditing complexity of $O(\frac{d^2\ln(1/\alpha\delta)}{\alpha^2}\ln^2(1/\emin))$.
\end{theorem}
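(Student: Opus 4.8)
The plan is to establish, on a single event of probability $1-\delta$, four facts: (a) every pool $S_t$ drawn in \algref{alg:aprs} is a good sample in the sense of \eqref{eq:vc}; (b) in every iteration with $\eta_t\ge\eta:=\err(D,\chr)$, the coordinate-wise scan drives $\vb_t$ strictly below the optimal rectangle $\va^{*}$, so that the restricted class $\chr[\vb_t]$ still contains a population-optimal hypothesis and the empirical minimizer over it is accurate for $S_t$; (c) the stopping test of \algref{alg:aprs} fires at an iteration $t^{*}$ with $\eta<\eta_{t^{*}}=\Theta(\eta)$; and (d) each iteration spends few negative queries. Then (b)--(c) plus the converse half of \eqref{eq:vc} give the $(1+\alpha)$-guarantee, and summing (d) over the $O(\log(1/\emin))$ iterations gives the auditing bound. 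For (a): put $\nu=\alpha/25$ and $\delta'=\delta/\log_2(1/\emin)$; by a union bound over the at most $\log_2(1/\emin)+1$ iterations, with probability $1-\delta$ every $S_t$ (of size $m_\nu(\eta_t,\delta',10d)$) satisfies \eqref{eq:vc} with $\epsilon=\eta_t$ and, in addition, the negative-error quantities entering $V_t$ concentrate up to a $(1+\nu)$ factor -- this is where the slack $10d$ in the VC-parameter is consumed, all the relevant classes (built from $\chr$ and its restrictions) having VC dimension $O(d)$. Fix this event. Writing $h^{*}=\argmin_{h\in\chr}\err(D,h)$, it then holds for every $t$ that $\err(S_t,h^{*})\le\eta+\nu\eta_t$ and that $\err(D,h)\le\max\{(1+\nu)\err(S_t,h),\,\err(S_t,h)+\nu\eta_t\}$ for all $h$.

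For (b), fix an iteration with $\eta_t\ge\eta$. In the scan of coordinate $i$, any negative queried while the scan is still at points with $i$-th coordinate at least $a^{*}[i]$ is a false positive of $h^{*}$, and the number of false positives of $h^{*}$ in $S_t$ is at most $\err(S_t,h^{*})|S_t|\le(\eta+\nu\eta_t)|S_t|\le(1+\nu)\eta_t|S_t|\le\lceil(1+\nu)\eta_t|S_t|\rceil$; since the scan stops only after strictly more negatives than this, it descends below $a^{*}[i]$, so $b_t[i]<a^{*}[i]$ (ties broken by a fixed convention). Hence $\vb_t\le\va^{*}$ and $h^{*}\in\chr[\vb_t]$. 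Furthermore every un-queried point of $S_t$ has $i$-th coordinate at most $b_t[i]$ for every $i$, hence is labeled $-1$ by every $h\in\chr[\vb_t]$; therefore $\err(S_t,h)=\err(S_{\vb_t},h)+K_t$ for all $h\in\chr[\vb_t]$, where $K_t$ (the fraction of un-queried points whose true label is $+1$) is independent of $h$. Consequently the minimizer $\hat h_t$ of $\err(S_{\vb_t},\cdot)$ over $\chr[\vb_t]$ also minimizes $\err(S_t,\cdot)$ over $\chr[\vb_t]$, and since $h^{*}\in\chr[\vb_t]$ we obtain $\err(S_t,\hat h_t)\le\err(S_t,h^{*})\le\eta+\nu\eta_t$.

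For (c): the statistic $\hat\eta_t=\max_{h\in V_t}\errneg(S_{\vb_t},h)$ should, by the definitions of the version space $V_\nu$ and of $\errneg$, be a constant-factor estimate of the empirical optimum $\err(S_{\vb_t},\chr[\vb_t])$, which by (b), the identity $\err(S_{\vb_t},h)=\err(S_t,h)-K_t$, $\err(D,\chr)=\eta$, and \eqref{eq:vc}, is $\Theta(\eta)$ whenever $\eta_t\ge\eta$. Hence there are absolute constants $2\le c_2<c_1\le 12$ such that $\eta_t\ge c_1\eta$ forces $\hat\eta_t\le\eta_t/4$ (no fire) and $\eta_t\le c_2\eta$ forces $\hat\eta_t>\eta_t/4$ (fire). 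Since the $\eta_t$ are halved from $1$ down past $\emin\le\eta$, some iteration has $\eta_t\in(\eta,c_2\eta]$, so the test fires at some $t^{*}$; the no-fire property gives $\eta_{t^{*}}<c_1\eta$, and the absence of a fire at $t^{*}-1$ gives $\eta_{t^{*}-1}=2\eta_{t^{*}}>c_2\eta\ge 2\eta$, i.e. $\eta_{t^{*}}>\eta$ (when $t^{*}=0$ one argues directly that $\eta_{t^{*}}=1>\eta$). Thus on the good event the loop does not run to its end and $\hat h_{t^{*}}$ is returned with $\eta<\eta_{t^{*}}<c_1\eta$. Combining with (b), $\err(S_{t^{*}},\hat h_{t^{*}})\le\eta+\nu\eta_{t^{*}}\le(1+c_1\nu)\eta$, and then the converse half of \eqref{eq:vc} yields $\err(D,\hat h_{t^{*}})\le\max\{(1+c_1\nu)(1+\nu),\,1+2c_1\nu\}\,\eta\le(1+\alpha)\eta$ for $\nu=\alpha/25$ and $c_1\le 12$.

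For (d), in iteration $t$ each of the $d$ coordinate scans stops after $O((1+\nu)\eta_t|S_t|)$ negative queries, and $\eta_t|S_t|=\eta_t\,m_\nu(\eta_t,\delta',10d)=O\!\big((d\ln(1/\nu\eta_t)+\ln(1/\delta'))/\nu^2\big)$, so iteration $t$ uses $O\!\big(d(d\ln(1/\nu\eta_t)+\ln(1/\delta))/\nu^2\big)$ negative queries. Summing over the at most $\log_2(1/\emin)+1$ iterations, using $\eta_t\ge\emin$ (so $\ln(1/\nu\eta_t)=O(\ln(1/\alpha)+\ln(1/\emin))$) and $\nu=\alpha/25$, gives the claimed $O\!\big(\tfrac{d^2\ln(1/\alpha\delta)}{\alpha^2}\ln^2(1/\emin)\big)$. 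The main obstacle is step (c): from the facts that the scan has pushed $\vb_t$ below $\va^{*}$ and that $S_{\vb_t}$ has empirical optimum $\Theta(\eta)$, one must extract two-sided control of $\hat\eta_t$ -- in particular the lower bound that makes the test fire before $\eta_t$ drops below $\eta$ (which would destroy $\vb_t\le\va^{*}$ and hence accuracy) -- and this requires the precise definitions of $V_\nu$ and $\errneg$ and the disjunction-of-thresholds structure of $\chr[\vb_t]$, together with a choice of constants for which the converse-VC computation above still closes at $1+\alpha$.
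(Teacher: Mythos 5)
Your outline reproduces the paper's sample-size bookkeeping and the auditing-complexity count (your step (d) matches the paper's computation almost verbatim), but step (c) — which you yourself flag as ``the main obstacle'' — is not just unproven, it is false as stated, and the gap propagates back into step (b). You claim the stopping test must fire at a round $t^{*}$ with $\eta < \eta_{t^{*}} = \Theta(\eta)$, arguing two-sided control of $\hat\eta_t$ as an estimate of $\err(S_{\vb_t},\chr[\vb_t]) = \Theta(\eta)$. But $\hat\eta_t = \max_{h\in V_t}\errneg(S_{\vb_t},h)$ estimates the \emph{negative} error (false positives only), not the total error. If the best hypothesis $h^{*}$ incurs all of its error on false negatives (positive points inside its rectangle), then $\errneg(D,h^{*})=0$, $\hat\eta_t$ can remain far below $\eta_t/4$ for every $t$, and the test never fires: the loop legitimately runs down to $\eta_t \approx \emin \le \eta$. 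Your ``fire before $\eta_t$ drops below $\eta$'' lower bound therefore does not exist, and since your step (b) (the argument that $\vb_t \le \va^{*}$) is explicitly conditioned on $\eta_t \ge \eta$, you are left with no accuracy guarantee for the hypothesis actually returned.

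The missing idea is that the whole algorithm is governed by the negative error, not the total error, and this is what makes rounds with $\eta_t < \eta$ harmless. The paper maintains by induction (Lemma~\ref{lem:claims}) the invariant $A(t)$: $\errneg(D,h^{*}) \le \eta_t$ for every round the algorithm reaches; $A(t)$ holds at round $t+1$ precisely because the test did not fire at round $t$ and $\hat\eta_t$ upper-bounds $\errneg(D,h^{*})$ up to $(1+\nu)$ factors. The count of negatives met during the coordinate-$i$ scan above $a^{*}[i]$ is then bounded by $\errneg(S_t,h^{*})|S_t| \le (1+\nu)\eta_t|S_t|$ — using the invariant, not $\err(S_t,h^{*})$ — so $\vb_t \le \va^{*}$ and $h^{*}\in\chr[\vb_t]$ hold at \emph{every} round, however small $\eta_t$ becomes. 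The stopping analysis then needs only the one-sided statement (Lemma~\ref{lem:stop}) that the test does not fire while $\eta_t \ge 8\err(D,\chr)$, giving $\eta_T \le \max\{8\eta, 2\emin\}$ at termination (whether by the test or by loop exhaustion), which is all that is required to make the additive slack $O(\nu\eta_T)=O(\nu\eta)$ in the final bound $\err(D,\hat h)\le\err(D,\chr)\ap{(1+\nu)^2,\eta_T}$. You would also need the paper's Lemma~\ref{lem:hstarinv} (via the auxiliary functions $f_{\va,\vb}$ of VC dimension $10d$) to transfer the empirical optimality of $\hat h$ on the partially imputed sample $S_{\vb_t}$ to a guarantee on $D$; your constant-$K_t$ observation handles the transfer from $S_{\vb_t}$ to $S_t$ but not the uniform convergence step from $S_{\vb_t}$ to $D$.
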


If $\emin$ is polynomially close to the true $\eta$, we get an auditing complexity of $O(d^2\ln^2(1/\eta))$,
compared to the active label complexity of $\Omega(d/\eta)$, an exponential improvement in $\eta$. It is an open question whether the quadratic dependence on $d$ is necessary here.

\algref{alg:aprs} implements a `low-confidence' version of the realizable algorithm. It sequentially queries points in each direction, until enough negative points have been observed to make sure the threshold in this direction has been overstepped. To bound the number of negative labels, the algorithm 
iteratively refines lower bounds on the locations of the best thresholds, and an upper bound on the \emph{negative error}, defined as the probability that a point from $D$ with negative label is classified as positive by a minimal-error classifier. The algorithm uses queries that mostly result in positive labels, and stops when the upper bound on the negative error cannot be refined.
The idea of iteratively refining a set of possible hypotheses has been used in a long line of active learning works \citep{CohnAtLa94,BalcanBeLa06,Hanneke07,DasguptaHM:2008}. Here we refine in a particular way that uses the structure of $\chr$, and allows bounding the number of negative examples we observe.

We use the following notation in \algref{alg:aprs}. The negative error of a hypothesis is $\errneg(D,h) = \P_{(X,Y) \sim D}[h(X) = 1 \text{ and } Y = -1]$.
It is easy to see that the same convergence guarantees that hold for $\err(\cdot,\cdot)$ using a sample size $m_\nu(\epsilon,\delta,d)$ hold also for the negative error $\errneg(\cdot,\cdot)$ (see \lemref{lem:errneg}). 
For a labeled set of points $S$, an $\epsilon \leq (0,1)$ and a hypothesis class $\cH$, 
denote 
$V_\nu(S,\epsilon, \cH) =  \{ h \in \cH \mid \err(S,h) \leq \err(S,\cH) + (2 \nu + \nu^2)\cdot\max(\err(S,\cH),\epsilon )\}$.
For a vector $\vb \in \reals_+^d$, define $\chr[\vb] = \{ h_\va \in \chr \mid \va \geq \vb\}$.

\thmref{thm:aprmain} is proven in Section \ref{thm:aprmain:pf}. The proof idea is to show that at each round $t$, $V_t$ includes any $h^* \in \argmin_{h \in \cH}\err(D,h)$, and $\hat{\eta}_t$ is an upper bound on $\errneg(D,h^*)$. Further, at any given point minimizing the error on $S_{\vb_t}$ is equivalent to minimizing the error on the entire (unlabeled) sample. We conclude that the algorithm obtains a good approximation of the total error. Its auditing complexity is bounded since it queries a bounded number of negative points at each round.

\section{Outcome-dependent Costs for a General Hypothesis Class \label{sec:general}}

In this section we return to the realizable pool setting and consider finite hypothesis classes $\cH$. We consider general outcome-dependent costs and a general space of labels $\cY$, so that $\cH\subseteq \cY^\cX$. Let $S \subseteq \cX$ be an unlabeled pool, and let $\cost:S\times \cH \rightarrow \reals_+$ denote the cost of a query: For $x\in S$ and $h \in \cH$, $\cost(x,h)$ is the cost of querying the label of $x$ given that $h$ is the true (unknown) hypothesis. In the auditing setting, $\cY = \{-1,+1\}$ and  $\cost(x,h) = \one[h(x) = -1]$. For active learning, $\cost \equiv 1$. Note that under this definition of cost function, the algorithm may not know the cost of the query until it reveals the true hypothesis.

Define $\caud(S)$ to be the minimal cost of an algorithm that for any labeling of $S$ which is consistent with some $h \in \cH$ produces a hypothesis $\hat{h}$ such that $\err(S,\hat{h}) = 0$. 
In the active learning setting, where $\cost \equiv 1$, it is NP-hard to obtain $\caud(S)$ for general $\cH$ and $S$. This can be shown by a reduction to set-cover \citep{HyafilRi76}. A simple adaptation of the reduction for the auditing complexity, which we defer to the full version of this work, shows that it is also NP-hard to obtain $\caud(S)$ in the auditing setting.

For active learning, and for query costs that do not depend on the true hypothesis (that is $\cost(x,h) \equiv \cost(x)$), \citet{GolovinKr11} showed an efficient greedy strategy that achieves a cost of $O(\caud(S)\cdot\ln(|\cH|))$ for any $S$. This approach has also been shown to provide considerable performance gains in practical settings \citep{GonenSaSh13}. The greedy strategy consists of iteratively selecting a point whose label splits the set of possible hypotheses as evenly as possible, with a normalization proportional on the cost of each query.

We now show that for outcome-dependent costs, if there are two labels and the cost depends only on the label, then another greedy strategy provides similar approximation guarantees for $\caud(S)$. The algorithm is defined as follows: Suppose that so far the algorithm requested labels for $x_1,\ldots,x_t$ and received the corresponding labels $y_1,\ldots,y_t$. Letting $S_t = \{(x_1,y_1),\ldots,(x_t,y_t)\}$, denote the current version space by $V(S_t) =\{h \in \cH|_S \mid \forall (x,y) \in S_t , h(x) = y\}$. The next query selected by the algorithm is 
\[
x \in \argmax_{x \in S} \min_{h \in \cH} \frac{|V(S_t)\setminus V(S_t \cup \{(x,h(x))\})|}{\cost(x,h)}.
\]
That is, the algorithm selects the query that in the worst-case over the possible hypotheses, would remove the most hypotheses from the version spaces, when normalizing by the outcome-dependent cost of the query. 
The algorithm terminates when $|V(S_t)| = 1$, and returns the single hypothesis in the version space.

\begin{theorem}\label{thm:greedy}
For any hypothesis class $\cH$ with $|\cY| = 2$, any pool $S$, and any true hypothesis $h \in \cH$, if $\cost(x,h) \equiv \cost(x,h(x))$, then the cost of the proposed algorithm is at most $(\ln(|\cH|_S|-1)+1)\cdot \OPT$.\footnote{A stronger version was erroneously given in the short version of this paper. However, our proof holds only for this weaker version.}
\end{theorem}

If $\cost$ is the auditing cost, the proposed algorithm is mapped to the following intuitive strategy: At every round, select a query such that, if its result is a negative label, then the number of hypotheses removed from the version space is the largest. 
This strategy is consistent with a simple principle based on a partial ordering of the points: 
For points $x,x'$ in the pool, define $x' \preceq x$ if $\{h \in \cH \mid h(x') = -1\} \supseteq \{h \in \cH \mid h(x) = -1\}$, so that if $x'$ has a negative label, so does $x$. In the auditing setting, it is always preferable to query $x$ before querying $x'$. Therefore, for any realizable auditing problem, there exists an optimal algorithm that adheres to this principle. It is thus encouraging that our greedy algorithm is also consistent with it. 

An $O(\ln(|\cH|_S|))$ approximation factor for auditing is less appealing than the same factor for active learning.  By information-theoretic arguments, active label complexity is at least $\log_2(|\cH|_S|)$ (and hence the approximation at most squares the cost), but this does not hold for auditing. Nonetheless, hardness of approximation results for set cover \citep{feige1998threshold}, in conjunction with the reduction to set cover of \citet{HyafilRi76} mentioned above, imply that such an approximation factor cannot be avoided for a general auditing algorithm.

\section{Conclusion and Future Directions}

In this paper we propose a model of active learning with query costs that depend on the outcome of the query.
We show that the auditing complexity can be significantly different from the active learning complexity, and that tailoring algorithms for auditing can be beneficial.  Our algorithms take advantage of the fact that positive labels are free, to improve error at less cost than in active learning.
We also described a general approach to designing auditing procedures for finite hypothesis classes, based on a greedy strategy and on a partial ordering of points, which takes advantage of the asymmetric label costs.  

There are many interesting directions suggested by this work.  
First, it is known that for some hypothesis classes, active learning cannot improve over passive learning for certain distributions~\citep{Dasgupta04}, and the same is true for auditing.
However, exponential speedups are possible for active learning on certain classes of distributions~\citep{BalcanBeLa06,DasguptaHM:2008}, in particular ones with a small disagreement coefficient~\citep{Hanneke07}. This quantity is related to the Alexander capacity function~\citep{Koltchinskii:10active}, which appears in lower bounds for active learning~\citep{RaginskyR:11nips}. It would be interesting if a similar property of the distribution can guarantee an improvement with auditing over active or passive learning.  

Investigating such a general property might shed light on auditing for other important hypothesis classes such as decision trees or halfspaces. It is well known that for some important settings, such as learning with hyperplanes, there are distributions which resist any improvement using active learning \citep{Dasgupta04}.  Recent work by \citet{GonenSaSh13} has shown that both theoretically and empirically, more aggressive learning strategies can be effective in the realizable case.  These strategies are based on heuristics~\citep{TongK:02active} that query the most ``informative'' points.  It would be interesting to see how such approaches should change in the presence of asymmetric label costs.

The name ``auditing'' suggests an imbalance in the number of points per class (this is the case in fraud).  Prior work on learning from unbalanced data was surveyed by~\citet{HeG:09imbalanced}.  Some of these approaches~\citep{ErtekinHBG:07border} use the same active learning heuristics~\citep{TongK:02active}, and it would be interesting to see how these apply to auditing.

In this work we considered algorithms which aim to minimize the number of negative labels needed to classify all of the points accurately, but this is not the only way to measure the performance in an auditing setting.  For example, we may wish to maximize the number of positive points the algorithm finds subject to a bound on the number of negative labels encountered.  In addition, auditing is an extreme version of asymmetric label costs -- positive labels are free -- but it would be interesting to study more general label costs.  An interesting generalization along these lines is a multiclass setting with a different cost for each label.  These measures and costs are different from those studied in active learning, and may lead to new algorithmic insights.  

%
%
%
%
%
%

%

%\newpage

%\small
%\putbib[mybib]
%\end{bibunit}
%\normalsize
%
%
%
%

%
%\clearpage
%\begin{bibunit}[abbrvnat]
%\clearpage

\appendix

\section{Proofs omitted from the text}\label{app:proofs}

\subsection{Additional notation}
We use $C, C_1,C_2,\ldots,c,c'$ etc.\ to denote positive numerical constants. Their values may change between expressions. 
We use the shorthand $\forall^\delta$ to say that a statement holds with probability at least $1-c\delta$, for some constant $c$.  
Denote a multiplicative/additive upper bound for $a$ by $a\ap{n,\lambda} = \max\{n a, a + (n-1)\lambda\}$. We will use the following easy fact.

\begin{fact}\label{fact:appr}
For any non-negative numbers $a,b,c,\epsilon,n,m$,
if $a \leq b\ap{n,\epsilon}$ then $a\ap{m,\lambda} \leq b\ap{mn,\lambda}$.
\end{fact}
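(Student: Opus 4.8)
The plan is a short two-step argument: monotonicity of the operator $x\mapsto x\ap{k,\lambda}$, followed by a "sub-composition" inequality $\big(b\ap{n,\lambda}\big)\ap{m,\lambda}\le b\ap{mn,\lambda}$. I read the statement with a single additive-slack parameter, writing $\lambda$ for it throughout (the printed $c$ is unused and the printed $\epsilon$ should be this $\lambda$), and with $m,n\ge 1$ — this is the regime in which the claim is applied, and it is the only regime in which it holds: e.g.\ for $n=0$, $b=1$, $\lambda=0$, $m=2$ one has $a=1\le b\ap{0,0}=1$ but $a\ap{2,0}=2>1=b\ap{0,0}$.

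First I would record two elementary observations. For fixed $\lambda\ge0$ and $k\ge1$, both $x\mapsto kx$ and $x\mapsto x+(k-1)\lambda$ are non-decreasing, hence so is $x\mapsto x\ap{k,\lambda}=\max\{kx,\,x+(k-1)\lambda\}$; this is the monotonicity we need. Second, for any $k\ge1$ and $x\ge0$, the comparison $kx\ge x+(k-1)\lambda$ is equivalent to $(k-1)x\ge(k-1)\lambda$, i.e.\ to $x\ge\lambda$; consequently $x\ap{k,\lambda}=kx$ when $x\ge\lambda$ and $x\ap{k,\lambda}=x+(k-1)\lambda$ when $x\le\lambda$. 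In particular $kx\le x+(k-1)\lambda$ whenever $x\le\lambda$.

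Next I would prove $\big(b\ap{n,\lambda}\big)\ap{m,\lambda}\le b\ap{mn,\lambda}$. Expanding and distributing the maxima (using $m>0$),
$$\big(b\ap{n,\lambda}\big)\ap{m,\lambda}=\max\big\{\,mnb,\ mb+m(n-1)\lambda,\ nb+(m-1)\lambda,\ b+(m+n-2)\lambda\,\big\},$$
while $b\ap{mn,\lambda}=\max\{\,mnb,\ b+(mn-1)\lambda\,\}$. It then suffices to bound each of the four terms by $\max\{\,mnb,\ b+(mn-1)\lambda\,\}$. The term $mnb$ is immediate. For $mb+m(n-1)\lambda$: if $b\ge\lambda$ it is at most $mb+m(n-1)b=mnb$, and if $b\le\lambda$ then $mb\le b+(m-1)\lambda$, so it is at most $b+(m-1)\lambda+m(n-1)\lambda=b+(mn-1)\lambda$. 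For $nb+(m-1)\lambda$: if $b\ge\lambda$ it is at most $(n+m-1)b\le mnb$, using $n+m-1\le mn$ (i.e.\ $(m-1)(n-1)\ge0$), and if $b\le\lambda$ then $nb\le b+(n-1)\lambda$, so it is at most $b+(m+n-2)\lambda\le b+(mn-1)\lambda$. Finally $b+(m+n-2)\lambda\le b+(mn-1)\lambda$ again by $(m-1)(n-1)\ge0$. This gives the claimed inequality.

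Finally I would combine the pieces: given $a\le b\ap{n,\lambda}$, monotonicity of $x\mapsto x\ap{m,\lambda}$ yields $a\ap{m,\lambda}\le\big(b\ap{n,\lambda}\big)\ap{m,\lambda}$, and the previous step bounds the right-hand side by $b\ap{mn,\lambda}$. I do not expect a real obstacle here — the whole argument is elementary case analysis on the sign of $b-\lambda$; the only points worth flagging are the normalization to a single additive parameter, the restriction $m,n\ge1$, and the fact that these operators compose only sub-multiplicatively (not exactly), so only the one-sided bound in Fact~\ref{fact:appr} is asserted.
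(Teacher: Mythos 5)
Your proof is correct. The paper itself offers no proof of \factref{fact:appr} --- it is stated as an ``easy fact'' in the appendix and used without justification --- so there is nothing to compare against; your write-up simply supplies the missing verification. The decomposition into (i) monotonicity of $x\mapsto x\ap{m,\lambda}$ and (ii) the sub-composition bound $\bigl(b\ap{n,\lambda}\bigr)\ap{m,\lambda}\le b\ap{mn,\lambda}$ is the natural route, and your four-term expansion of the iterated maximum together with the case split on the sign of $b-\lambda$ checks out (the only inequalities needed are $(m-1)(n-1)\ge 0$ and $kx\le x+(k-1)\lambda$ for $x\le\lambda$, both of which you establish). You were also right to flag the two issues with the statement as printed: the quantified $c$ is unused and the $\epsilon$ in the hypothesis should be the same $\lambda$ as in the conclusion, and the claim requires $m,n\ge 1$ (your $n=0$ counterexample is valid). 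That restriction is harmless for the paper, since every invocation of the fact uses multiplicative factors of the form $(1+\nu)^k\ge 1$.
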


\subsection{Standard results from probability}

These are included for the ease of the reader.

\begin{theorem}[Hoeffding's Inequality~\citep{Hoeffding63}]
\label{thm:hoeffding}
Let the random variables $X_1, \ldots, X_n$ be independent with $X_i \in [0,1]$, and let $X = \frac{1}{n}\sum_{i \in [n]}X_i$.  Then for any $t > 0$,
	\[
	\mathbb{P}[X > \mathbb{E}[X] + t] \le \exp\left( - 2 nt^2 \right).
	\]
\end{theorem}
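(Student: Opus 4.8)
The plan is to use the Cram\'er--Chernoff exponential moment method. Fix $s > 0$ to be chosen later, and write $S_n = \sum_{i\in[n]}(X_i - \E X_i)$, so that the event $\{X > \E[X] + t\}$ is exactly $\{S_n > nt\}$. By Markov's inequality applied to the nonnegative random variable $e^{sS_n}$,
\[
\P[S_n > nt] = \P[e^{sS_n} > e^{snt}] \le e^{-snt}\,\E[e^{sS_n}].
\]
Because the $X_i$ are independent, so are the centered variables $X_i - \E X_i$, and hence $\E[e^{sS_n}] = \prod_{i\in[n]} \E[e^{s(X_i - \E X_i)}]$. The whole problem thus reduces to bounding the moment generating function of a single centered bounded variable.

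The key lemma I would establish (Hoeffding's lemma) is: if $Y$ is a random variable with $\E[Y]=0$ and $Y \in [a,b]$ almost surely, then $\E[e^{sY}] \le e^{s^2(b-a)^2/8}$ for all $s\in\reals$. To prove it, I would set $\psi(s) = \ln \E[e^{sY}]$ and compute $\psi(0)=0$, $\psi'(0) = \E[Y] = 0$, and $\psi''(s) = \Var_{s}(Y)$ where the variance is taken under the tilted distribution with density proportional to $e^{sy}$; since this tilted law is still supported in $[a,b]$, its variance is at most $((b-a)/2)^2$ (the maximal variance of a $[a,b]$-valued variable, attained by the two-point distribution on the endpoints). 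A second-order Taylor expansion of $\psi$ with remainder then gives $\psi(s) \le s^2(b-a)^2/8$, which is the claim. Applying this with $Y = X_i - \E X_i$, which lies in an interval of length $b-a = 1$ since $X_i\in[0,1]$, yields $\E[e^{s(X_i-\E X_i)}] \le e^{s^2/8}$.

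Combining the pieces, $\P[X > \E X + t] \le e^{-snt} \prod_{i\in[n]} e^{s^2/8} = \exp\!\big(-snt + ns^2/8\big)$. This is a quadratic in $s$ minimized at $s = 4t$, and substituting gives $\exp(-4nt^2 + 2nt^2) = \exp(-2nt^2)$, as desired. I expect the main obstacle to be the proof of Hoeffding's lemma, specifically the bound $\psi''(s) \le (b-a)^2/4$: one has to recognize $\psi''(s)$ as the variance under the exponentially tilted measure and then invoke the Popoviciu-type bound that any random variable confined to $[a,b]$ has variance at most $(b-a)^2/4$. Everything else is routine: Markov's inequality, the product formula from independence, and the one-variable optimization over $s$. (An alternative route to Hoeffding's lemma, which I could use instead if preferred, is to bound $e^{sy}$ on $[a,b]$ by the chord $\frac{b-y}{b-a}e^{sa} + \frac{y-a}{b-a}e^{sb}$ using convexity of $t\mapsto e^{st}$, take expectations to kill the linear term, and then show the resulting function of $s$ is $\le e^{s^2(b-a)^2/8}$ by examining its own log and its derivatives.)
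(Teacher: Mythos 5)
Your proof is correct and is the canonical argument for this result: the Cram\'er--Chernoff exponential moment method combined with Hoeffding's lemma (proved via the variance of the tilted measure), followed by optimizing over $s=4t$. The paper itself gives no proof of this theorem --- it is stated as a standard fact with a citation to Hoeffding (1963) --- and your argument is essentially the one from that original source, so there is nothing to reconcile.
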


\begin{theorem}[Bernstein's Inequality~\citep{Bernstein46}]
\label{thm:bernstein}
Let the random variables $X_1, \ldots, X_n$ be independent with $X_i - \mathbb{E}[X_i] \le 1$.  Let $X = \frac{1}{n}\sum_{i=1}^{n} X_i$ and let $\sigma^2 = \frac{1}{n}\sum_{i=1}^{n} \Var(X_i)$ be the variance of $X$.  Then for any $t > 0$,
	\[
	\mathbb{P}[X > \mathbb{E}[X] + t] \le \exp\left( - \frac{nt^2}{2 (\sigma^2 + t/3) } \right).
	\]
	In particular, by setting the right hand side to $\delta$ and solving for $t$, we get that with probability $1-\delta$,
	\[
	X \leq \E[X] + \frac{2}{3}\ln(1/\delta)/n + \sqrt{2\sigma^2 \ln(1/\delta)/n}.
	\]
\end{theorem}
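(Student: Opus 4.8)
The plan is to prove Bernstein's inequality by the classical Chernoff (exponential-moment) argument, and then to derive the ``in particular'' statement by inverting the resulting tail bound.

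First I would center the variables: set $Y_i = X_i - \E[X_i]$, so that $Y_i \le 1$, $\E[Y_i] = 0$, and $\E[Y_i^2] = \Var(X_i) =: \sigma_i^2$ with $\sum_{i=1}^n \sigma_i^2 = n\sigma^2$. For any $\lambda > 0$, Markov's inequality applied to $e^{\lambda \sum_i Y_i}$ gives
\[
\mathbb{P}[X > \E[X] + t] = \mathbb{P}\Bigl[\sum_{i=1}^n Y_i > nt\Bigr] \le e^{-\lambda n t}\prod_{i=1}^n \E\bigl[e^{\lambda Y_i}\bigr].
\]
So it remains to bound each moment-generating factor and then optimize over $\lambda$.

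The heart of the proof is a one-variable MGF estimate that uses only the one-sided bound $Y_i \le 1$. Let $\phi(u) = (e^u - 1 - u)/u^2$ for $u \ne 0$ and $\phi(0) = \tfrac12$; one checks (by differentiation, or from the power series $\phi(u) = \sum_{k \ge 2} u^{k-2}/k!$) that $\phi$ is nonnegative and nondecreasing on all of $\reals$. Since $Y_i \le 1$ and $\lambda > 0$ we have $\lambda Y_i \le \lambda$, hence
\[
e^{\lambda Y_i} = 1 + \lambda Y_i + (\lambda Y_i)^2 \phi(\lambda Y_i) \le 1 + \lambda Y_i + (\lambda Y_i)^2 \phi(\lambda);
\]
taking expectations and using $1 + x \le e^x$ yields $\E[e^{\lambda Y_i}] \le 1 + \lambda^2 \sigma_i^2 \phi(\lambda) \le \exp(\lambda^2 \sigma_i^2 \phi(\lambda))$. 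Next I would bound $\phi(\lambda) \le \frac{1}{2(1 - \lambda/3)}$ for $0 \le \lambda < 3$, which follows termwise from $k! \ge 2 \cdot 3^{k-2}$ (for $k \ge 2$) and summing the geometric series. Multiplying over $i$ gives $\prod_i \E[e^{\lambda Y_i}] \le \exp\bigl(n\sigma^2 \lambda^2 \phi(\lambda)\bigr)$, so that for all $\lambda \in [0,3)$,
\[
\mathbb{P}[X > \E[X] + t] \le \exp\left(-n\Bigl(\lambda t - \tfrac{\sigma^2 \lambda^2}{2(1 - \lambda/3)}\Bigr)\right).
\]

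To finish the first bound I would plug in $\lambda = t/(\sigma^2 + t/3)$, which lies in $[0,3)$ since $t > 0$: a direct computation gives $1 - \lambda/3 = \sigma^2/(\sigma^2 + t/3)$, whence $\lambda t = t^2/(\sigma^2 + t/3)$ and $\tfrac{\sigma^2\lambda^2}{2(1-\lambda/3)} = t^2/(2(\sigma^2+t/3))$, so the exponent collapses to $-\tfrac{nt^2}{2(\sigma^2 + t/3)}$, as claimed. For the ``in particular'' statement, setting the tail bound equal to $\delta$ amounts to solving $nt^2 - \tfrac{2}{3}\ln(1/\delta)\,t - 2\sigma^2\ln(1/\delta) = 0$; its positive root $t_0$ satisfies, by $\sqrt{a+b} \le \sqrt a + \sqrt b$, $t_0 \le \tfrac{2}{3}\ln(1/\delta)/n + \sqrt{2\sigma^2\ln(1/\delta)/n} =: t^\star$. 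Since $t \mapsto \tfrac{nt^2}{2(\sigma^2 + t/3)}$ is increasing on $(0,\infty)$, evaluating the tail bound at $t^\star \ge t_0$ gives $\mathbb{P}[X > \E[X] + t^\star] \le \delta$, which is the stated conclusion. I expect the main obstacle to be the one-variable MGF estimate: one must exploit that $Y_i$ is bounded only from above (not symmetrically), which forces use of the monotonicity of $\phi$ on \emph{all} of $\reals$ so that $\phi(\lambda Y_i) \le \phi(\lambda)$ even when $Y_i < 0$, and one must extract exactly the slack $\tfrac{1}{2(1-\lambda/3)}$ so that the subsequent choice of $\lambda$ reproduces the $t/3$ in the denominator; the Chernoff reduction, the geometric-series bound, and the quadratic inversion are then routine.
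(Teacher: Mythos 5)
Your proof is correct. The paper does not prove this statement at all---it is quoted as a standard result with a citation to Bernstein, so there is no in-paper argument to compare against; your Chernoff/exponential-moment derivation, with the Bennett-style bound $\E[e^{\lambda Y_i}] \le \exp(\lambda^2\sigma_i^2\phi(\lambda))$, the estimate $\phi(\lambda)\le \tfrac{1}{2(1-\lambda/3)}$ via $k!\ge 2\cdot 3^{k-2}$, the optimal choice $\lambda = t/(\sigma^2+t/3)$, and the quadratic inversion using $\sqrt{a+b}\le\sqrt a+\sqrt b$, is exactly the classical proof and all the computations check out. One small point worth tightening: the power series $\phi(u)=\sum_{k\ge2}u^{k-2}/k!$ only makes monotonicity of $\phi$ obvious on $[0,\infty)$; for the step $\phi(\lambda Y_i)\le\phi(\lambda)$ when $Y_i<0$ you need monotonicity on all of $\reals$, which is most cleanly seen from the integral representation $\phi(u)=\int_0^1(1-s)e^{su}\,ds$ (or a careful derivative computation), so you should cite or prove that version of the lemma rather than rely on the series.
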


\subsection{Proofs for Section \ref{sec:thresholds}}\label{app:thresholds}

\begin{proof}[of \lemref{lem:line:subsample}]
We start with the first inequality. If $\emax \geq 1/6$ then the inequality trivially holds.
Thus assume $\emax < 1/6$. 
Let $W = 14\ln(8/\delta)$ and let $N = WT$ be the size of the subset $S_q$. 
Let $h^* \in \argmin_{h\in \cht} \err(S,\cht)$ be a minimizer of the error on $S$. By assumption $\err(S,h^*) \le \emax$.
For each $t$, let $\{ X_t(l) \mid l \in [W]\}$ be the $W$ points drawn from $S(t)$ by the procedure and $Y_t(l)$ be their corresponding labels given by $S$.  Let $Z_t(l) = \one[Y_t(l) \neq h^*(X_t(l))]$  and note that $\{ Z_t(l) \}$ for $l \in [W]$ are i.i.d. random variables, and $Z_t(l) - \E[Z_t(l)] \le 1$.  Furthermore, we have $\Var[Z_t(l)] \leq \E[Z^2_t(l)] \leq \E[Z_t(l)]$.  Therefore 
	\[
	\frac{1}{N} \sum_{t \in [T], l\in[W]} \Var[Z_t(l)] \leq \frac{1}{N} \sum_{t \in [T], l\in[W]} \E[Z_t(l)] \leq \err(S,h^*) \leq \emax.
	\]
Therefore by Bernstein's inequality \citep[see \thmref{thm:bernstein}]{Bernstein46},  with probability $1-\delta$,
	\[
	\err(S_q,h^*) = \frac{1}{N}\sum_{t \in [T],l\in[W]}Z_t(l) 
		\leq \emax + \frac{2}{3}\ln(1/\delta)/N + \sqrt{2\emax \ln(1/\delta)/N}.
\]
Because $T = \max\{\floor{1/3\eta},1\}$, for $\emax < 1/6$ we have $T \geq 1/6\eta$. Therefore
$N \geq 14\ln(8/\delta)/6\eta$. Substituting $N$ and $\delta$ in the inequality above we get that with probability $1-\delta/8$, $\err(S_q, \hat{h}) \le \err(S_q,h^*) \leq 6 \emax$.

For the second claim, if $\emax > 1/17$ the claim trivially holds. Thus assume $\emax \leq 1/17$.
We now show that the error of a threshold $\hat{h} \in \argmin_{h \in \cht} \err(S_q,\cht)$ on the original set $S$ is at most $17\emax$.  Let $A(h) = \{ x \in U \mid h(x) = 1 \}$ be the points in the set $U$ labeled $1$ by a hypothesis $h$, and let $g_i = h_{x'_{(i-1) m+1}}$ be the hypothesis corresponding to the threshold at the leftmost point of $S(i)$.  Note that $A(g_i) = \bigcup_{j > i} S(j)$.

\begin{figure}
\centering
\includegraphics[width=4in]{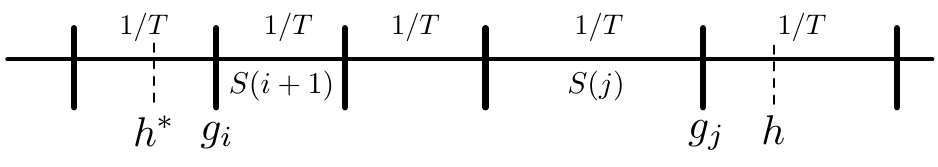}
\caption{Illustration of bound for Lemma \ref{lem:line:subsample}.  The contribution to the error on $U$ in each interval is at most $1/T$.  By assumption, the error for $h$ is $4/T$ more than the error for $h^*$, so  there must be three full intervals between them. \label{fig:thresh}}
\end{figure}

We claim that for any hypothesis $h$ such that $\err(S,h) > \err(S,h^*) + 4/T$, the error on the sampled set $S_q$ must satisfy $\err(S_q,h) > \err(S_q,h^*)$ with high probability, and therefore $h$ cannot be a minimizer $\hat{h}$.  We consider two cases, depending on whether the threshold for $h$ is larger or smaller than $h^*$.  First suppose that the threshold is larger so that $A(h) \subseteq A(h^*)$.  Let $i$ be the smallest index such that $A(g_i) \subseteq A(h^*)$ and $j$ be the largest index such that $A(h) \subseteq A(g_j)$.  The situation is illustrated in Figure \ref{fig:thresh}.  Since $\err(S,h) > \err(S,h^*) + 4/T$, there must be three full intervals $S(t)$ between $g_i$ and $g_j$, so $|j - i| \ge 3$.  Define $\Delta = |j - i|$.

Then using the fact that a single $S(t)$ can contribute at most $1/T$ to the error on $S_q$, we can bound the gap:
	\begin{align*}
	&\err(S_q,h) - \err(S_q,h^*)\\
	&\quad = \err(S_q,h) - \err(S_q,g_j) + \err(S_q,g_j) - \err(S_q,g_i) + \err(S_q,g_i)- \err(S_q,h^*) \\
	&\quad \geq \err(S_q,g_j) - \err(S_q,g_i) - 2/T. 
	\end{align*}
Therefore for any $h$ whose threshold is between those for $g_j$ and $g_{j+1}$, in order to show that $\err(S_q,h) > \err(S_q,h^*)$ with high probability it is sufficient to show that $\err(S_q,g_j) - \err(S_q,g_i) \ge 2/T$ with high probability.

Note that the number of points in $S_q$ on which $g_i$ and $g_j$ disagree is $W \Delta$, corresponding to the subsamples in the $\Delta$ segments $S(i+1), S(i+2), \ldots, S(j)$ in Algorithm \ref{alg:subset}.  For each pair $(x,y)$ in those segments, either $g_i$ or $g_j$ errs, and $\err((x,y),g_j) - \err((x,y),g_i) = 1 - 2 \one[ g_i(x) \ne y ]$.  Let $Z_t^i(l) = \one[Y_t(l) \neq h_i(X_t(l))]$. Then
	\begin{align*}
	\err(S_q,g_j) - \err(S_q,g_i) &= \frac{1}{WT} \sum_{t=i+1}^{j} \sum_{l \in [W]} ( 1 - 2 Z_t^i(l) )= \frac{ \Delta }{ T }  - \frac{2}{WT} \sum_{t=i+1}^{j} \sum_{l \in [W]} Z^i_t(l).
	\end{align*}
The event that this difference is smaller than $2/T$ is equivalent to
	\begin{align*}
	\frac{1}{W \Delta } \sum_{t=i+1}^{j} \sum_{l \in [W]} Z^i_t(l) \ge \frac{ \Delta  - 2 }{ 2 \Delta } \ge \frac{1}{6}.
	\end{align*}	
Note that $h_i$ agrees with $h^*$ on $\bigcup_{t=i+1}^{j} S(t)$, so
	\[
	\E\left[ \frac{1}{W \Delta } \sum_{t=i+1}^{j} \sum_{l \in [W]} Z^i_t(l) \right] \le \err(S,h^*) \leq \emax.
	\]
By Hoeffding's inequality \citep[see \thmref{thm:hoeffding}]{Hoeffding63}, and since $\emax \le 1/17$,
	\begin{align*}
	\P[\err(S_q,g_j) - \err(S_q,g_i) \le 2/T]
	&\le 
	\P\left[\frac{1}{W \Delta } \sum_{t=i+1}^{j} \sum_{l \in [W]} Z'_t(l) \ge \frac{1}{6} \right] \\
	&\le \exp\left( - 2 W \Delta \left( \frac{1}{6} - \emax \right)^2 \right) \\
	&\le \exp\left( - W \Delta / 42 \right).
	\end{align*}
Now taking a union bound over all $j$ such that $j > i + 3$, we have
	\begin{align*}
	\P[\forall j \ge i + 3,\err(S_q,g_j) - \err(S_q,g_i) \le 2/T ]
	&\le \sum_{\Delta = 3}^{T} \exp\left( - W \Delta / 42 \right) \\
	&\le \frac{ \exp( - W/14 ) - \exp( - W(T+1)/42 ) }{ 1 - \exp( - W/42 ) } \\
	&\le \frac{ \exp( - W/14 ) }{ 1 - \exp( - W/42) }.
	\end{align*}
The other case when $h < h^*$ is symmetric, so we see that if $\err(S,h) > \err(S,h^*) + 4/T$ then
	\begin{align*}
	\P[ \err(S_q,h) > \err(S_q,h^*) ]
	&\le 2 \frac{ \exp( - W/14 ) }{ 1 - \exp( - W/42 ) }.
	\end{align*}
Since $W = 14\ln(8/\delta)$, we get that the total probability is bounded by $\delta/2$. Since $T > \frac{1}{3 \emax} - 1$, we have for $\emax \le 1/17$ that
$T > \frac{1}{3\emax} - \frac{1}{17\emax} \geq 1/4\emax$. Therefore for $\hat{h}$ which minimizes the error on $S_q$ we have $\err(S,\hat{h}) < \err(S,h^*) + 4/T < 17 \emax$.
\end{proof}

To prove \thmref{thm:threshalpha}, we require the following lemma.

\begin{lemma}\label{lem:algthresh}
For $S_0$ and $h_{\hat{\va}}$ in \algref{alg:thresh}, if $\err(S_0,\cht) \le (1+\nu)\emax$, then the auditing complexity of step \ref{step:sq} of \algref{alg:thresh} is at most $85\ln(16/\delta)$ and with probability $1-\delta$, $\err(S_0,h_{\hat{\va}}) \leq 35(1+\nu)\emax$.
\end{lemma}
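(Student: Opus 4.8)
The plan is to chain the two uniform-convergence guarantees recorded on page~\pageref{page:vc} with \lemref{lem:line:pool} and \lemref{lem:line:subsample}, carefully tracking the accuracy parameter as it is passed between the sampling and subsampling steps of \algref{alg:thresh}. Throughout, write $\emax' := 2(1+\nu)\emax$; this is exactly the accuracy parameter with which \algref{alg:subset} is invoked in step~\ref{step:sq}, and the stopping rule in that step queries until $\ceil{12|S_q|(1+\nu)\emax}+1 = \ceil{6|S_q|\emax'}+1$ negatives have been seen. We treat $S_0$ as fixed, subject to the hypothesis $\err(S_0,\cht)\le(1+\nu)\emax$.

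First I would condition on the good event for the subsample $S$, drawn uniformly from $S_0$ with $|S| = m^{\mathrm{ag}}((1+\nu)\emax,\delta/2,1)$. Applying the $m^{\mathrm{ag}}$ guarantee from page~\pageref{page:vc} with the base distribution taken to be uniform on $S_0$ and with $d=1$ (the VC dimension of $\cht$), with probability $1-\delta/2$ we have $\err(S_0,h)\le\err(S,h)+(1+\nu)\emax$ for all $h\in\cht$ and $\err(S,\cht)\le\err(S_0,\cht)+(1+\nu)\emax\le 2(1+\nu)\emax=\emax'$. Next I would apply \lemref{lem:line:subsample} to the (now fixed, good) pool $S$ with parameters $\emax',\delta/2$, which again match the call to \algref{alg:subset}: with probability $1-\delta/2$ over the internal randomness of \algref{alg:subset} we get $\err(S_q,\cht)\le 6\emax'$, and every threshold that minimizes $\err(S_q,\cdot)$ over $\cht$ has error at most $17\emax'$ on $S$.

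On the intersection of these events, the precondition of \lemref{lem:line:pool} holds for the pool $S_q$ with $k := \ceil{6|S_q|\emax'}$, since $\err(S_q,\cht)\le 6\emax'\le k/|S_q|$. Hence steps~\ref{step:sq}--\ref{step:hata} (querying $S_q$ from highest to lowest, stopping after $k+1$ negatives, and picking $\hat{\va}$ to minimize the error on the queried points) return $h_{\hat{\va}}$ with $\err(S_q,h_{\hat{\va}})=\err(S_q,\cht)$, using exactly $k+1$ negative queries. In particular $h_{\hat{\va}}$ is a minimizer of $\err(S_q,\cdot)$, so by the previous paragraph $\err(S,h_{\hat{\va}})\le 17\emax'=34(1+\nu)\emax$, and therefore $\err(S_0,h_{\hat{\va}})\le\err(S,h_{\hat{\va}})+(1+\nu)\emax\le 35(1+\nu)\emax$. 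A union bound over the two events gives this with probability $1-\delta$. For the deterministic auditing bound, a routine calculation using $|S_q| = 14\ln(16/\delta)\cdot T$ with $T=\max\{\floor{1/(3\emax')},1\}$ suffices: when $T>1$ one has $3T\emax'\le 1$, so $k+1 \le \ceil{6\cdot14\ln(16/\delta)\cdot T\emax'}+1 \le 28\ln(16/\delta)+2$; when $T=1$ the audit cost is trivially at most $|S_q| = 14\ln(16/\delta)$. Either way it is at most $85\ln(16/\delta)$.

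The main subtlety — as opposed to the routine bookkeeping above — is the claim that \lemref{lem:line:subsample}'s bound on the error on $S$ applies to the specific hypothesis $h_{\hat{\va}}$ produced by the querying procedure, and not merely to a prespecified $\argmin$ of $\err(S_q,\cdot)$. Here I would appeal to the internal structure of the proof of \lemref{lem:line:subsample}: that proof shows that, with the stated probability, \emph{every} threshold $h$ with $\err(S,h) > \err(S,h^*) + 4/T$ satisfies $\err(S_q,h) > \err(S_q,h^*)$ and hence fails to minimize $\err(S_q,\cdot)$; since $h_{\hat{\va}}$ does minimize $\err(S_q,\cdot)$, it automatically inherits the bound $\err(S,h_{\hat{\va}})\le 17\emax'$. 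One should also double-check that $\err(S_q,\cht)\le 6\emax' \le k/|S_q|$ really does follow (it does, since the ceiling only increases $6|S_q|\emax'$), so that \lemref{lem:line:pool} is applicable. With these two checks in hand, the chaining described above completes the proof.
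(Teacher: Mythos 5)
Your proposal is correct and follows essentially the same approach as the paper's own proof: condition on the $m^{\mathrm{ag}}$ uniform-convergence event for $S$, apply \lemref{lem:line:subsample} with accuracy parameter $2(1+\nu)\emax$ and confidence $\delta/2$, invoke \lemref{lem:line:pool} to certify that step~\ref{step:sq} both terminates with the stated audit count and returns a minimizer of $\err(S_q,\cdot)$, then transfer the resulting $17\cdot 2(1+\nu)\emax$ bound on $S$ back to $S_0$ via the $m^{\mathrm{ag}}$ guarantee. Your version is if anything more careful — explicitly noting that \lemref{lem:line:subsample}'s conclusion holds for \emph{any} minimizer of $\err(S_q,\cdot)$ (which is what one needs, since $h_{\hat{\va}}$ is not a prespecified argmin), and tracking the audit-count constants cleanly.
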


\begin{proof}
Denote $\gamma = (1+\nu)\gamma$.
Let $h^* \in \argmin_{h \in \cH}\err(S_0,\cH)$. 
Since $|S| = m^{\mathrm{ag}}(\gamma,\delta/2,1)$, with probability $1-\delta/2$, $\err(S,\cht) \leq \err(S_0,\cH) + \gamma \leq 2\gamma$.
By \lemref{lem:line:subsample}, the total number of negative errors in $S_q$ is at most $\ceil{12\gamma\cdot |S_q|}+1$. Therefore, by \lemref{lem:line:pool}, step \ref{step:sq} finds a hypothesis $h_{\hat{\va}}$, that minimizes the error on $S_q$. 
By \lemref{lem:line:subsample}, with probability $1-\delta/2$, $\err(S,h_{\hat{\va}}) \leq 34\gamma$. Therefore, due to the size of $|S|$ again, with probability $1-\delta$, $\err(S_0,h_{\hat{\va}}) \leq 35\gamma$.

The auditing complexity of step \ref{step:sq} is at most $6\gamma\cdot |S_q|+1$. Now, from \algref{alg:subset},
$|S_q| \leq 14\ln(16/\delta)\cdot \max\{\floor{1/3\gamma},1\}$. 
Since $\gamma\cdot \max\{\floor{1/3\gamma},1\} \leq 1$, the auditing complexity of \algref{alg:thresh} is at most 
$\ceil{6\gamma\cdot |S_q|}+1 \leq 85\ln(16/\delta)$.
\end{proof}

We are now ready to prove the theorem. 
\begin{proof}[of \thmref{thm:threshalpha}]
We first bound $\err(D,\hat{h})$. 
Let $h^* \in \argmin_{h \in \cH}\err(D,h)$, and $h^*_0 \in \argmin_{h \in \cH}\err({S_0},h)$. 
Since $|{S_0}| = m_{\nu}(\emax,\delta/2,1)$, with probability $1-\delta/2$,
\begin{equation}\label{eq:deq}
\err({S_0},h^*_0)  \leq \err({S_0},h^*) \leq \err(D,\emax)\ap{(1+\nu),\emax} \leq (1+\nu)\emax.
\end{equation}
Therefore, by \lemref{lem:algthresh}, $h_{\hat{\va}}$ satisfies $\forall^\delta, \err({S_0},\hat{h}) \leq 35(1+\nu)\emax$. 
It follows that 
\[
\P_{(X,Y)\sim {S_0}}[h^*_0(X) \neq \hat{h}(X)] \leq \err({S_0},\hat{h}) + \err({S_0},h^*_0) \leq 36(1+\nu)\emax.
\]
In other words, $h^*_0$ classifies at most $36(1+\nu)\emax|{S_0}|$ points differently from $h_{\hat{\va}}$. Therefore $h^*_0 \in \argmin_{h\in \cH}\err(S_1,h)$, where $S_1$ is defined in step \ref{step:s1}, since all points in ${S_0}\setminus S_1$ are classified the same by all possible candidates for $h^*_0$. 

We have 
\begin{equation}\label{eq:ers1}
\err(S_1,h^*_0) \leq \frac{|{S_0}|}{|S_1|}\err({S_0},h^*_0) \leq \frac{|{S_0}|}{2\cdot36(1+\nu)\emax|{S_0}|}(1+\nu)\emax \leq \frac{1}{72}.
\end{equation}
Since $|S_2| = m^{\mathrm{ag}}(\nu/72,\delta/2,1)$, with probability $1-\delta/2$, 
\[
\err(S_1,\hat{h}) \leq \err(S_2,\hat{h})+\nu/72 \leq \err(S_2,h^*_0)+ \nu/72,
\]
 and $\err(S_2,h^*_0) \leq \err(S_1,h^*_0)\ap{(1+\nu),\frac{1}{72}}$. Therefore 
\[
 \forall^\delta, \err(S_1,\hat{h}) \leq \err(S_1,h^*_0)\ap{(1+\nu),\frac{1}{72}} +\nu/72 \leq 
 \err(S_1,h^*_0) + \nu/36,
\]
where the last inequality follows from \eqref{eq:ers1}.
Note also that $\err({S_0}\setminus S_1,\hat{h}) = \err({S_0}\setminus S_1,h^*_0)$.
\begin{align*}
\err({S_0},\hat{h}) &= \frac{|{S_0}|-|S_1|}{|{S_0}|}\err({S_0}\setminus S_1,h^*_0) + \frac{|S_1|}{|{S_0}|}\err(S_1,\hat{h})\\
&\leq \frac{|{S_0}|-|S_1|}{|{S_0}|}\err({S_0}\setminus S_1,h^*_0) + \frac{|S_1|}{|{S_0}|}(\err(S_1,h^*_0) + \nu/36)\\
&= \err({S_0},h^*_0) + 72(1+\nu)\emax(\nu/36)\\
&\leq \err({S_0},h^*_0) + 4\nu\emax.
\end{align*}
In the last inequality we used the fact that $\nu \leq 1$. Therefore
$\err({S_0},\hat{h}) \leq \err({S_0},h^*_0) + 4\nu\emax.$
Combining this with \eqref{eq:deq} we conclude that with probability $1-\delta$, $\err({S_0},\hat{h}) \leq \emax(1+5\nu)$. Since $\nu = \alpha/5$,
this proves the lemma.

The auditing complexity of \algref{alg:thresh} is at most the auditing complexity of step \ref{step:sq}, which is $O(\ln(1/\delta))$ by \lemref{lem:algthresh}, plus $m^{\mathrm{ag}}(\nu/72,\delta/2,1) = O(\ln(1/\delta)/\nu^2) O(\ln(1/\delta)/\alpha^2)$. Thus the total auditing complexity is also $O(\ln(1/\delta)/\alpha^2)$.
\end{proof}

\begin{proof}[of \thmref{thm:lowerthresh}]
Fix $\emin$ and define $\beta = \alpha + 1$.  Assume without loss of generality that the algorithm returns a hypothesis $\hat{h}$ after having queried exactly $M$ negative labels.  We will define a finite set of distributions such that if the algorithm emits a correct answer for all of them, then the given lower bound on $M$ must hold.

We consider distributions with a uniform marginal over $[0,1]$, and define several conditional labeling distributions for points on $[0,1]$.  
First, we define the distribution $D_-$, which assigns $-1$ to all $x \in [0,1-2\emin] \cup [1-\emin,1]$, and $+1$ to $x \in (1-2\emin,1-\emin)$.  Note that $\err(D_-,\cht) = \emin$, so the guarantee of the algorithm is that $\err(\hat{h}, D_-) \leq \beta \emin$ with probability $1-\delta$. Thus for $D_-$ the algorithm produces a hypothesis $\hat{h} = h_a$ for some threshold value $a \ge 1 - (1 + \beta) \emin$ with probability $1 - \delta$.

Second, we define a family of distributions $D_1,\ldots,D_N$, for $N = \floor{\ln(1/2\eta\beta)/\ln(4\beta)}$, such that for each $D_i$, the algorithm cannot return $h_a$ with $a \geq 1-(\beta+1)\eta$ with probability more than $\delta$.  

\begin{figure}[bh]
\begin{center}
\begin{tikzpicture}[scale = 1.5]
\draw[-] (0,0) -- (5,0);
\draw[-] (0,-0.1) -- (0,+0.1);
\draw[-] (5,-0.1) -- (5,+0.1);
\draw[-] (4.5,-0.1) -- (4.5,+0.1);
\draw[-] (4,-0.1) -- (4,+0.1);
\draw[-] (3,-0.1) -- (3,+0.1);
\draw[-] (3,-0.1) -- (3,+0.1);
\node at (4.75,-0.2) {$\eta$};
\node at (4.25,-0.2) {$\eta$};
\node at (3.55,-0.2) {$(\beta\!\!-\!\!1)\eta$};
\node at (3,-0.2) {$a_0$};
\node at (4.75,0.2) {0};
\node at (4.25,0.2) {1};
\node at (3.5,0.2) {0};
\draw[-] (1,-0.1) -- (1,+0.1);
\draw[-] (2.5,-0.1) -- (2.5,+0.1);
\node at (1.5,0.2) {$1-\alpha$};
\node at (0.5,0.2) {0};
\node at (2.75,0.2) {0};
\node at (1,-0.2) {$a_i$};
\node at (2.5,-0.2) {$a_{i-1}$};
\end{tikzpicture}
\end{center}
\caption{The probability of a positive label for $D_i$ (not in scale) \label{fig:thresh:lower}}
\end{figure}
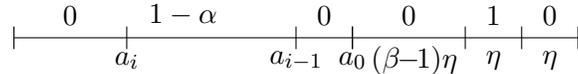

Let $\lambda = 1/8\beta$.  Define $a_0 = 1-(1 + \beta)\emin$, and for $i \in [N]$ define $l_i = \beta (4 \beta)^{i} \emin$ and $a_i = a_0 - \sum_{j \le i} l_j$.  Define the distribution $D_i$ as follows (See Figure \ref{fig:thresh:lower}): 
\[
\P_{D_i}[Y = +1 \mid X = x] = \begin{cases} 
0 & x \in [0,a_i]\cup[a_{i-1}, 1-2\emin]\cup [1-\emin,1]\\
1 & x \in (1-2\emin,1-\emin)\\
1-\lambda & x \in (a_i,a_{i-1}).
\end{cases}
\]
The distribution $D_i$ agrees with the distribution $D_-$ except on the interval $(a_i,a_{i-1})$, where it is positive with probability $1 - \lambda$ and negative with probability $\lambda$.  We claim that if the algorithm returns a threshold greater than $a_0$ on $D_i$ with probability more than $\delta$, it violates the $(\alpha,\delta)$-learning guarantee.  For $a_0$, and $\beta \ge 1$,
	\begin{align*}
	\err(D_i, h_{a_0}) &\ge (1 - \lambda) l_i > \frac{7}{8} \beta (4 \beta)^i \emin.
	\end{align*}
For $a_i$,
	\begin{align*}
	\err(D_i, h_{a_i}) &= \beta \emin + \sum_{j < i} l_j + \lambda l_i  \\
		&= \left(\sum_{j=0}^{i-1} \beta (4 \beta)^{j} + \frac{1}{8 \beta} \beta (4 \beta)^{i} \right) \emin \\
		\end{align*}
		Hence 
		\begin{align*}
		\err(D_i, h_{a_i}) &= \beta \left(\frac{ (4 \beta)^i - 1 }{ 4 \beta - 1 } + \frac{1}{8} (4 \beta)^{i} \right) \emin \\
		&< \frac{7}{8} \beta (4 \beta)^i \left( \frac{8}{7 (4 \beta - 1)} +  \frac{1}{7} \right) \emin \\
		&< \frac{7}{8} (4 \beta)^i.
	\end{align*}
From this we can see that $\err(D_i, h_{a_0}) > \beta \err(D_i, h_{a_i})$,
violating the guarantee of the algorithm.  It follows that for any $i$, if the true labeling is $D_i$, then the probability that the algorithm returns $h_a$ for $a \geq 1-(\beta+1)\eta$ is at most $\delta$. We now show that this implies a lower bound on $M$.

First, since all distributions label $[a_0,1]$ in the same way, we may assume without loss of generality that the algorithm never queries points in this segment. It follows that if the true distribution is $D_-$, the algorithm observes only negative labels.

Denote by $Y_t$ the random variable whose value is the label the algorithm receives for its $t$'th query, or $0$ if the algorithm stopped before querying $t$ points. Denote by $Z_t$ the random variable whose value is $j$ if on round $t$, the algorithm queries a point in $[a_j,a_{j-1}]$, and $-1$ if the algorithm stops before round $t$. 
Denote by $A_t$ the event that $\forall i \in [t], Y_i = -1$. Also denote $p_j^t = \P[Z_t = j \mid A_{t-1}]$.
We will show a lower bound on $\P[A_M]$, that is the probability that all first $M$ queries return a negative label.
Since in this case the algorithm cannot distinguish $D_j$ from $D_-$, this probability must be small, which implies a lower bound on $M$.

By definition, $\sum_{j\in [N]}p_j^t = 1$ for all $t \leq M$. Thus, there exists some $j \in [N]$ such that 
$\sum_{t \in [M]}p_j^t \leq M/N$. Fix $j$ to one such value.
Assume that the true labeling is $D_j$. Then 
	\begin{align*}
	\P[A_1] &= \lambda p_j^1 + (1-p^1_j) = 1 - (1-\lambda)p_j^1, \\
	\P[A_t] &= \P[A_{t-1}]\P[Y_t = -1 \mid A_{t-1}] = \P[A_{t-1}](1 - (1-\lambda)p_j^t).
	\end{align*}
It follows that if the true distribution is $D_j$, then 
\[
\P[A_M] = \prod_{t \in [M]}(1 - (1-\lambda)p_j^t).
\] 
We consider two kinds of indices $t \in [M]$.  First let \mbox{$I_j = \{t\in [M]\mid p_j^t > 1/2(1-\lambda)\}$}.  Since $\sum_{t \in [M]} p_j^t \leq M/N$,
we have \mbox{$|I_j| \leq 2(1-\lambda)M/N$}.  For $t \in I$ we use the bound $1 - (1-\lambda)p_j^t \geq \lambda$.  For $t \notin I$, we use the bound $1 - (1-\lambda)p_j^t \geq \exp(-2(1-\lambda)p_j^t).$
This follows from the inequality $\exp(-2x) \leq 1-x$, which holds for $x \in [0,\half]$. 
Combining the two cases, we get
\begin{align}
\P[A_M] &= \prod_{i \in [M]}(1 - (1-\lambda)p_j^t) \geq \exp\left(-2(1-\lambda)\sum_{t \notin I}p_j^t \right)\lambda^{|I|} \nonumber \\
&\geq \exp\left(-2(1-\lambda)\frac{M}{N} \right)\lambda^{2(1-\lambda)(M/N) } \nonumber \\
&= \exp\left(-2(1-\lambda)\frac{M}{N}(1+\ln(1/\lambda) )\right). \label{eq:Mnegbound}
\end{align}

The algorithm must stop after seeing $M$ negative labels, thus it must return an answer at time $M$ if $A_M$ occurs. If the true distribution is $D_-$, then $A_M$ occurs with probability $1$. Therefore, if $A_M$ occurs the algorithm must return $h_a$ for $a \geq a_0$ with probability at least $1-\delta$. 
Thus, if the true distribution is $D_j$, the probability that the algorithm errs is at least $\P[A_M](1-\delta)$.
Since the algorithm errs with probability at most $\delta$, we have
\[
\delta \geq \P[A_M](1-\delta), %
\]
Solving for $M$ using \eqref{eq:Mnegbound}, we get 
\[
M \geq \frac{N \ln(\frac{1-\delta}{\delta})}{2(1-\lambda)(1+\ln(1/\lambda))}.
\]
Treating $\beta$, and hence $\lambda$, as constants, we get that $N \geq C\ln(1/\emin)-C'$, therefore
$M \geq C\ln(\frac{1-\delta}{\delta})\ln(1/\emin)-C'$ for some positive constants $C,C'$.
\end{proof}

\subsection{Proofs for Section \ref{sec:aprs}}

Here we gather proof details for the hypothesis class $\chr$ and $\chr^{-}$ of axis-aligned rectangles.

\subsubsection{Proof of Theorem \ref{thm:apr:ag:lower} \label{thm:apr:ag:lower:pf} }

\begin{proof}[of Theorem \ref{thm:apr:ag:lower}]
We will show that in the realizable case, an algorithm that returns $\hat{h}$ such that $\forall^\delta \err(D,\hat{h}) \leq \epsilon$ requires $\Omega(d/\epsilon)$ labels. The statement of the theorem follows by adding an unavoidable error of $\eta$ to all distributions.

Without loss of generality, suppose $d$ is even and $1/4\epsilon$ is an integer, and partition the $d$ dimensions in $d/2$ pairs of coordinates $\{(1,2), (3,4), \ldots, (d-1,d)\}$.  For each coordinate pair $(2 i - 1,2 i)$ choose $\frac{1}{4\epsilon}$ distinct points $S_i$ on the unit circle in the subspace spanned by the $i$ and $(i+1)$-th coordinates,
as in the proof of \thmref{thm:aprlower}.
Consider distributions $D$ with a uniform marginal over the points in $S_1,\ldots,S_{d/2}$, so that the probability of each point is $4\epsilon/d$. Any distribution such that all points are labeled negative, except perhaps a single point in every $S_i$, is realizable. To get $\err(D,\hat{h}) \leq \epsilon$ with probability more than half, the algorithm must find whether there is a positive point in at least half of the $S_i$'s.

Let $T_i$ be the number of points queried by the algorithm in set $S_i$.  If the total number of queries that the algorithm makes is less than $d|S_i|/16$, then $\E[T_i] < |S_i|/8$ for at least half of the $i$'s.
If $\E[T_i] < |S_i|/8$ then with probability at least $1/2$, $T_i \leq 1/4$. Thus there exists a point in $S_i$ such that with probability at least $1/2$ the algorithm does not query this point, and therefore cannot tell whether it is positive. It follows that the algorithm must make at least $d|S_i|/16 = \Omega(d/\epsilon)$ queries on negative points.
\end{proof}

\subsubsection{Approximation bounds for error on samples \label{lem:errneg:pf} }

\begin{lemma}\label{lem:errneg}
Let $\cH$ be a hypothesis class with VC dimension $d \geq 1$, and let $S$ be a sample of size $m_\nu(\epsilon,\delta,d)$ drawn i.i.d.\ from a distribution $D$. With probability $1-\delta$, $\forall h \in \cH, $
\begin{align*}
&\errneg(S,h)\leq \errneg(D,h)\ap{(1+\nu),\epsilon} \text{ and }\errneg(D,h)\leq \errneg(S,h)\ap{(1+\nu),\epsilon}.
\end{align*}
\end{lemma}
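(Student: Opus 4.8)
The plan is to reduce the statement to the relative Vapnik--Chervonenkis deviation bounds already used for \eqref{eq:vc}, applied not to the error regions of $\cH$ but to the ``negative-error regions.'' For each $h \in \cH$ define $A_h = \{(x,y) \in \cX \times \binlab \mid h(x) = 1 \text{ and } y = -1\}$ and let $\cA = \{A_h \mid h \in \cH\}$, regarded as a class of events over $\cX \times \binlab$. Then $\errneg(D,h) = \P_{(X,Y)\sim D}[(X,Y) \in A_h]$ and $\errneg(S,h) = \frac{1}{|S|}\sum_{(x,y)\in S}\one[(x,y)\in A_h]$ are simply the true and empirical measures of $A_h$, so the two inequalities of the lemma are exactly the two-sided relative deviation inequality for every event in $\cA$. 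That inequality is the conclusion of the Vapnik--Chervonenkis bound for an event class of VC dimension at most $d$, provided $|S| \ge m_\nu(\epsilon,\delta,d')$ for some upper bound $d'$ on that VC dimension; since $m_\nu(\epsilon,\delta,\cdot)$ is non-decreasing in its last argument, it suffices to show that $\cA$ has VC dimension at most $d$.

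To bound the VC dimension of $\cA$: a point of the form $(x,+1)$ belongs to no member of $\cA$, hence can never be part of a shattered set, so any finite set shattered by $\cA$ has the form $(x_1,-1),\dots,(x_k,-1)$. For such points $(x_i,-1) \in A_h$ iff $h(x_i) = 1$, so this set is shattered by $\cA$ iff $\{x_1,\dots,x_k\}$ is shattered by the set system $\{h^{-1}(1) \mid h\in\cH\}$, whose VC dimension equals the VC dimension of $\cH$, namely $d$. Hence $k \le d$, and $\cA$ has VC dimension at most $d$.

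Combining the two observations, I would apply the relative Vapnik--Chervonenkis inequalities (the same ones behind \eqref{eq:vc}) to the event class $\cA$ with the sample $S$ of size $m_\nu(\epsilon,\delta,d) \ge m_\nu(\epsilon,\delta,\mathrm{VCdim}(\cA))$: with probability at least $1-\delta$, simultaneously for all $h\in\cH$, $\errneg(S,h) \le \errneg(D,h)\ap{(1+\nu),\epsilon}$ and $\errneg(D,h) \le \errneg(S,h)\ap{(1+\nu),\epsilon}$, which is the claim. The only point that needs care, and which I would state explicitly, is that the relative VC bounds apply to an arbitrary event class of bounded VC dimension and not merely to the canonical error regions $\{(x,y)\mid h(x)\ne y\}$ of a hypothesis class; granting that, the proof is essentially immediate, since $\errneg(\cdot,h)$ is the measure of a set drawn from a class of VC dimension at most $d$, which is exactly the situation covered by \eqref{eq:vc}.
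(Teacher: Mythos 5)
Your proof is correct and takes essentially the same approach as the paper: both arguments reduce the claim to the relative VC deviation bound for the class of ``negative-error'' sets $A_h = \{(x,y) : h(x)=1,\, y=-1\}$, and both bound the VC dimension of this class by $d$ using the same observation that any shattered point must have $y=-1$, whereupon membership in $A_h$ reduces to $h(x)=1$. The only cosmetic difference is that the paper packages the event class as the error regions of an auxiliary classifier family against a constant label so it can invoke \eqref{eq:vc} verbatim, whereas you apply the underlying VC relative-deviation inequality to the event class directly -- a point you correctly flag as needing explicit justification.
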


\begin{proof}
Let $f[h]:(\reals_+^d \times \binlab) \rightarrow \binlab$ such that 
$f[h](x,y) = \one[h(x) = 1 \text{ and } y = -1]$. For any distribution over $\reals_+^d \times \binlab$, consider a distribution $D'$ over $(\reals_+^d \times \binlab) \rightarrow \binlab$ that draws $((X,Y),Z) \sim D'$ such that $(X,Y)$ is drawn from $D$ and $Z$ is deterministically $1$. 
Then $\errneg(D,h) = \err(D',f[h])$. The VC-dimension of $\cF = \{ f[h] \mid h \in \cH\}$ is at most that of $\cH$:
Any set $((x_1,y_1),\ldots, (x_n,y_n))$ shattered by $\cF$ must have $y_i = -1$ for all $i \in [n]$. Therefore
$\forall h \in \cH, f[h](x_i,y_i) = h(x_i)$, hence $x_1,\ldots,x_d$ is shattered by $\cH$. The result follows by applying \eqref{eq:vc} to $\err(D',f[h])$.
\end{proof}

\subsubsection{Proof of Theorem \ref{thm:aprmain} \label{thm:aprmain:pf}}

\thmref{thm:aprmain} is proven using several lemmas.   We will need the following auxiliary result.

\begin{lemma}\label{lem:vcf}
Let $\cH$ be a hypothesis class of VC-dimension $d$, and let $f[h_1,h_2]:(\reals_+^d \times \binlab) \rightarrow \binlab$ be the function $f[h_1,h_2](x) = \one[h_1(x) = y \text{ or } h_2(x) = 1]$. 
The VC-dimension of $\cF = \{ f[h] \mid h \in \cH\}$ is at most $10d$.
\end{lemma}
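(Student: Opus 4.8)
The statement bounds the VC dimension of the class $\cF = \{ f[h_1,h_2] \mid h_1,h_2 \in \cH\}$ where $f[h_1,h_2](x,y) = \one[h_1(x) = y \text{ or } h_2(x) = 1]$, with $\cH$ of VC dimension $d$. The plan is to express membership in $\cF$ as a Boolean combination of a bounded number of set-system operations on the base class $\cH$ (more precisely, on $\cH$ and on the fixed-label ``projection'' classes used in \lemref{lem:errneg}), and then invoke the standard fact that combining $k$ classes each of VC dimension at most $d$ via a fixed Boolean formula (or more generally by a composition that shatters at most $k$-tuples) yields a class of VC dimension $O(dk\ln k)$ — here with the constants working out to at most $10d$.

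Concretely, first I would decompose $f[h_1,h_2](x,y)$ into the disjunction of three indicator events: $\{h_1(x)=1 \text{ and } y=1\}$, $\{h_1(x)=-1 \text{ and } y=-1\}$, and $\{h_2(x)=1\}$. The third is just the class $\cH$ (pulled back along the projection $(x,y)\mapsto x$), which has VC dimension $d$. The first two are ``$\errneg$-type'' and ``$\mathrm{err}_{\mathrm{pos}}$-type'' classes: by the argument in the proof of \lemref{lem:errneg}, each of these has VC dimension at most $d$, since any shattered set must have all its $y$-coordinates equal to a fixed value, at which point the class restricts to $\cH$ on the $x$-coordinates. So $\cF$ is contained in the class of unions of three sets, one from each of three families of VC dimension $\le d$.

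Then I would apply the standard combinatorial lemma (e.g. via the Sauer–Shelah bound: if $A$ is shattered by the union class, then the growth functions multiply, giving $2^{|A|} \le \prod_{i=1}^3 \binom{e|A|}{d}^d$, and solving this inequality forces $|A| \le C d$ for an explicit constant $C$). Choosing the bookkeeping carefully — three classes, each dimension $d$ — one checks the resulting bound is at most $10d$; this is where I'd be a little careful, since the naive $O(dk\log k)$ bound has hidden constants and one wants the clean ``$10d$''. The main obstacle is exactly this constant-chasing: getting from ``VC dimension is $O(d)$'' to the specific clean bound $10d$ requires solving the transcendental inequality $2^n \le \binom{en/d}{d}^{3d}$ (or similar) and verifying $n = 10d$ suffices, which is a short but slightly fiddly computation rather than a deep step. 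Everything else is routine reduction to $\cH$.
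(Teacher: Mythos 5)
Your overall strategy is the same as the paper's (restrict to a shattered set $S$ of size $n$, bound $|\cF|_S|$ by a product of Sauer--Shelah counts, and solve the resulting inequality $2^n \le (\cdot)$ for $n$), but the decomposition you choose is too lossy to deliver the stated constant, and the step you flag as ``a short but slightly fiddly computation'' would in fact fail. Splitting $f[h_1,h_2]$ into \emph{three} independent union components gives $|\cF|_S| \le |\cH|_S|^3 \le (en/d)^{3d}$, and the inequality $2^n \le (en/d)^{3d}$ is \emph{satisfied} at $n = 10d$ (since $2^{10} = 1024 < (10e)^3 \approx 2\cdot 10^4$), so no contradiction arises and you cannot rule out shattering a set of size $10d$; the smallest $c$ with $2^c > (ce)^3$ is $c = 17$, so your route only yields a bound of roughly $17d$.

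The fix, which is what the paper does, is to observe that your first two events, $\{h_1(x)=1 \wedge y=1\}$ and $\{h_1(x)=-1 \wedge y=-1\}$, are indexed by the \emph{same} hypothesis $h_1$: the restriction $f[h_1,h_2]|_S$ is completely determined by the pair $(h_1|_S, h_2|_S)$, so $|\cF|_S| \le |\cH|_S|^2 \le (en/d)^{2d}$ by Sauer's lemma. Then $2^n \le (en/d)^{2d}$ does fail at $n = 10d$ (since $2^{10} = 1024 > (10e)^2 \approx 739$), which rules out shattering and gives the bound $10d$. This constant is not cosmetic here: it is hard-coded into the sample sizes $m_\nu(\cdot,\cdot,10d)$ used by \algref{alg:aprs} and \lemref{lem:hstarinv}, so a weaker bound would require adjusting those. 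Everything else in your plan (the VC-dimension-$d$ bound for each ``$\errneg$-type'' component via the fixed-$y$ argument of \lemref{lem:errneg}) is fine but ultimately unnecessary once you count pairs directly.
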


\begin{proof}
Let $S = ((x_1,y_1),\ldots, (x_n,y_n))$ be a set shattered by $\cF$. Then $|\cF|_S| = 2^n$. In addition, 
$|\cF|_S| \leq |\cH|_S \times \cH|_S| \leq |\cH|_S|^2$. By Sauer's lemma, $|\cH|_S| \leq (en/d)^d$.
Therefore $2^n \leq (en/d)^{2d}$. It follows that $n \leq 10d$.
\end{proof}

The next lemma will help prove that the set of hypotheses maintained by the algorithm includes the best hypothesis for the distribution.

\begin{lemma}\label{lem:hstarinv}
Let $\nu,\epsilon > 0$ and $\delta \in (0,1)$. Let $S$ be a random labeled sample of size $m_\nu(\epsilon,\delta,10d)$ drawn from $D$ . 
For $\vb \in \reals_+^d$, let $S_\vb$ be identical to sample $S$  
except that if $(x,y) \in S$ and $x \leq \vb$, then $(x,-1) \in S_\vb$ instead of $(x,y)$.
Let $h^* \in \argmin_{h\in\chr}  \err(D,h)$, and let $\va^*$ such that $h^* = h_{\va^*}$.
Let $\hat{h}_\vb = \argmin_{h \in \chr[\vb]}\err(S_\vb, h)$.
Then $\forall^\delta$, for all $\vb \leq \va^*$,
\[
h^* \in V_\nu(S_\vb,\epsilon,\chr[\vb]), \text{ and } \err(D,\hat{h}_\vb) \leq \err(D,h^*)\ap{(1+\nu)^2,\epsilon}.
\]
\end{lemma}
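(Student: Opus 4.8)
The plan is to route both claims through a single high-probability uniform-convergence event, after first recording an exact identity for how the truncation $S\mapsto S_\vb$ acts on the empirical error of hypotheses in $\chr[\vb]$. Throughout, let $D_\vb$ denote $D$ with every point $x\le\vb$ deterministically relabelled $-1$, so that $S_\vb$ is to $S$ as $D_\vb$ is to $D$. \emph{Step 1 (truncation is an additive shift on $\chr[\vb]$).} Every $h_\va\in\chr[\vb]$ has $\va\ge\vb$, so $h_\va$ already labels every $x\le\vb$ as $-1$ (I fix the $\ge$/$>$ conventions so this holds on the boundary too). Hence relabelling the points inside the $\vb$-box to $-1$ only deletes the errors $h_\va$ made there, all on positive points, giving
\[
\err(D,h_\va)=\err(D_\vb,h_\va)+p_\vb,\qquad \err(S,h_\va)=\err(S_\vb,h_\va)+\hat{p}_\vb,
\]
with $p_\vb=\P_{(X,Y)\sim D}[X\le\vb,\,Y=+1]$, $\hat{p}_\vb$ its empirical analogue, and — crucially — neither depending on $h_\va$. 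In particular, when $\vb\le\va^*$ we have $h^*\in\chr[\vb]$, and since $h^*$ minimizes $\err(D,\cdot)$ over all of $\chr\supseteq\chr[\vb]$, the identity forces $\err(D_\vb,h^*)=\min_{h\in\chr[\vb]}\err(D_\vb,h)=\err(D_\vb,\chr[\vb])$ for every such $\vb$.

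\emph{Step 2 (one VC class controls all the truncated errors).} For a pair $\va\ge\vb$, $\err(S_\vb,h_\va)$ is the empirical average over $S$ of the $0/1$ loss $(x,y)\mapsto\one[h_\va(x)\ne y_\vb(x,y)]$, where $y_\vb(x,y)=-1$ if $x\le\vb$ and $=y$ otherwise. As $(\va,\vb)$ varies over $\va\ge\vb$, these losses are a Boolean combination of two $\chr$-type set memberships (membership of $x$ in the $\va$-region and in the $\vb$-region) together with the label, so the corresponding set system has VC dimension at most $10d$ by the Sauer-counting argument underlying \lemref{lem:vcf}. Since $|S|=m_\nu(\epsilon,\delta,10d)$, applying \eqref{eq:vc} to this loss class yields an event of probability $1-\delta$ on which, \emph{simultaneously for every $\vb\le\va^*$ and every $h\in\chr[\vb]$},
\[
\err(S_\vb,h)\le\err(D_\vb,h)\ap{1+\nu,\nu\epsilon}\quad\text{and}\quad\err(D_\vb,h)\le\err(S_\vb,h)\ap{1+\nu,\nu\epsilon}.
\]
(Weakening $\nu\epsilon$ to $\epsilon$ is free, by monotonicity of $\ap{\cdot}$ in its additive argument.)

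\emph{Step 3 (deduce both claims).} Work on that event, fix $\vb\le\va^*$, and set $\hat{h}_\vb=\argmin_{h\in\chr[\vb]}\err(S_\vb,h)$. For the first claim, the second inequality applied to $\hat{h}_\vb$ together with $\err(D_\vb,\hat{h}_\vb)\ge\err(D_\vb,\chr[\vb])=\err(D_\vb,h^*)$ from Step 1 gives $\err(D_\vb,h^*)\le\err(S_\vb,\chr[\vb])\ap{1+\nu,\nu\epsilon}$; combining with $\err(S_\vb,h^*)\le\err(D_\vb,h^*)\ap{1+\nu,\nu\epsilon}$ via \factref{fact:appr} yields $\err(S_\vb,h^*)\le\err(S_\vb,\chr[\vb])\ap{(1+\nu)^2,\epsilon}$, which is precisely $h^*\in V_\nu(S_\vb,\epsilon,\chr[\vb])$ after rewriting the definition of $V_\nu$ in $\ap{\cdot}$ form. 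For the second claim, $\err(D_\vb,\hat{h}_\vb)\le\err(S_\vb,\hat{h}_\vb)\ap{1+\nu,\nu\epsilon}\le\err(S_\vb,h^*)\ap{1+\nu,\nu\epsilon}\le\err(D_\vb,h^*)\ap{(1+\nu)^2,\nu\epsilon}$, using $h^*\in\chr[\vb]$ and optimality of $\hat{h}_\vb$ on $S_\vb$ for the middle step and \factref{fact:appr} for the last; adding $p_\vb$ to both sides, invoking $\err(D,\cdot)=\err(D_\vb,\cdot)+p_\vb$ from Step 1 for both $\hat{h}_\vb$ and $h^*$, and using $p+x\ap{n,\lambda}\le(p+x)\ap{n,\lambda}$ for $n\ge1$, gives $\err(D,\hat{h}_\vb)\le\err(D,h^*)\ap{(1+\nu)^2,\epsilon}$.

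The one genuinely non-routine point is Step 2: one must resist union-bounding over the infinitely many (and, in the eventual use inside \algref{alg:aprs}, data-dependent) boxes $\vb$, and instead observe that all the truncated errors are empirical means of a single VC class of dimension $O(d)$, so that a single sample of size $m_\nu(\epsilon,\delta,10d)$ controls them all at once. Everything else is bookkeeping with the $\ap{\cdot}$ notation and \factref{fact:appr}, plus a routine — if slightly annoying — treatment of boundary ties via consistent $\ge$/$>$ conventions.
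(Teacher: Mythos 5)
Your proof is correct and takes essentially the same route as the paper's: the additive-shift identity (your $p_\vb$ is the paper's $\lambda$) and the observation that the truncated errors over all pairs $\va\ge\vb$ form a single VC class of dimension at most $10d$ are exactly what the paper does via the auxiliary functions $f_{\va,\vb}$ and the lifted distribution $D'$, followed by the same chain of $\ap{\cdot}$ inequalities. One tiny slip: \eqref{eq:vc} gives $\ap{1+\nu,\epsilon}$ (whose additive term is $\nu\epsilon$), not $\ap{1+\nu,\nu\epsilon}$ (whose additive term would be $\nu^2\epsilon$), but since you weaken to $\epsilon$ at the end this does not affect the conclusion.
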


\begin{proof}[of Lemma \ref{lem:hstarinv}]
For the first claim, it suffices to show that $\forall^\delta$, for all $\vb \leq \va^*$,
	\begin{equation}\label{eq:v4}
	\err(S_\vb,h^*) \leq \err(S_\vb,\hat{h}_\vb)\ap{(1+\nu)^2, \epsilon}.
	\end{equation}
Define $f_{\va,\vb}:\reals_+^d \times \binlab \rightarrow \binlab$ such that 
$f_{\va,\vb}(x,y) = \one[h_\va(x) = y \text{ or } h_\vb(x) = -1]$.
Let $S' = \{ ((x,y),1) \mid (x,y) \in S\}$, and let $D'$ be a distribution over $(\reals_+^d \times \binlab) \times \binlab$ generated by drawing $((X,Y),Z) \sim D'$ where $(X,Y) \sim D$ and $Z = 1$.
Then $S'$ is drawn i.i.d. from $D'$. 
Note that for any $\va \geq \vb$, $h_\va$ classifies all points $x\leq \vb$ as negative. It follows that there is some $\lambda > 0$ such that for all $\va\geq \vb$, $\lambda = \err(D,h_{\va}) - \err(D',f_{\va,\vb})$. 

The VC-dimension of $\cF = \{ f_{\va,\vb} \mid \va \geq \vb\}$ is at most $10d$ (see \lemref{lem:vcf} in the appendix).
Since $|S'| \geq m_\nu(\epsilon,\delta,10d)$, 
$\forall^\delta,\forall f \in \cF, \err(S', f) \leq \err(D',f)\ap{1+\nu,\epsilon}$  and $\err(D', f) \leq \err(S',f)\ap{1+\nu,\epsilon}.$
Let $\hat{\va}_\vb \in \reals_+^d$ such that $\hat{h}_\vb = h_{\hat{\va}_\vb}$. 
We have $\err(D,h_{\va^*}) \leq \err(D,h_{\hat{\va}_\vb})$, therefore $\err(D',f_{\va^*,\vb}) \leq \err(D',h_{\hat{\va}_\vb,\vb})$. Combining these inequalities and using \factref{fact:appr}, we get 
	\begin{align*}
	\forall^\delta,\forall \vb\in \reals_+^d, \qquad \err(S', f_{\va^*,\vb}) &\leq \err(D',f_{\va^*,\vb})\ap{1+\nu,\epsilon} \\
	&\leq \err(D',f_{\hat{\va}_\vb,\vb})\ap{1+\nu,\epsilon} \\
	&\leq \err(S',f_{\hat{\va}_\vb,\vb})\ap{(1+\nu)^2,\epsilon}.
	\end{align*}
Noting that for $\va \geq \vb$, $\err(S',f_{\va,\vb}) = \err(S_\vb,h_{\va})$, this proves \eqref{eq:v4}.

For the second claim, 
	\begin{align*}
	\forall^\delta,\forall \vb\in \reals_+^d, \qquad
	\err(D', f_{\hat{\va}_\vb,\vb}) &\leq \err(S',f_{\hat{\va}_\vb,\vb})\ap{1+\nu,\epsilon} \\
	&\leq \err(S',f_{\va^*,\vb})\ap{1+\nu,\epsilon} \\
	&\leq \err(D',f_{\va^*,\vb})\ap{(1+\nu)^2,\epsilon}.
\end{align*}
Denoting $\lambda = \err(D,h_{\va^*}) - \err(D',f_{\va^*,\vb}) = \err(D,h_{\hat{\va}_\vb}) - \err(D',f_{\hat{\va}_\vb,\vb})$,
we get 
	\[
	\err(D,h_{\hat{\va}_\vb}) - \lambda \leq (\err(D,h_{\va^*})-\lambda)\ap{(1+\nu)^2,\epsilon}.
	\]
Since $\lambda > 0$, this implies $\err(D,h_{\hat{\va}_\vb}) \leq \err(D,h_{\va^*})\ap{(1+\nu)^2,\epsilon}.$
\end{proof}

The following lemma shows that $\eta_t$ is indeed an upper bound for the negative error of the best hypothesis. 

\begin{lemma}\label{lem:claims}
If the algorithm reaches round $T$, then $\forall^\delta, \forall t \leq T, \forall h^* \in \argmin_{h\in \chr}\err(D,h)$, the following claims hold:
\begin{itemize}
\item Claim $A(t)$: $\errneg(D,h^*) \leq \eta_t$.
\item Claim $B(t)$: $h^* \in V_t$, where $V_t$ is defined in step \ref{step:vt} of Algorithm \ref{alg:aprs}
\item Claim $C(t)$: $\errneg(D, h^*) \leq \hat{\eta}_t\ap{1+\nu,\eta_t}$.
\end{itemize}
\end{lemma}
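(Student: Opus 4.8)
The plan is to prove the three claims $A(t)$, $B(t)$, $C(t)$ simultaneously by induction on $t$, since they are mutually entangled: the bound on the auditing queries at round $t$ (which controls how many negative labels get flipped to $-1$ in passing from $S_t$ to $S_{\vb_t}$) depends on knowing $\errneg(D,h^*)\le\eta_t$, i.e. on $A(t)$; and $A(t)$ in turn is only guaranteed to hold because the algorithm did not terminate at round $t-1$, which by the termination test in step \ref{step:if} means $\hat\eta_{t-1}\le\eta_{t-1}/4$, and then $C(t-1)$ converts this into a bound $\errneg(D,h^*)\lesssim\eta_{t-1}/4 < \eta_{t-1}/2 = \eta_t$. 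So the induction step is: assuming $A(s),B(s),C(s)$ for all $s<t$ (and the non-termination through round $t-1$), derive $A(t)$, then $B(t)$, then $C(t)$.

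For the base case $t=0$: $\eta_0=1$, so $A(0)$ is trivial since $\errneg$ is a probability; $B(0)$ follows from applying Lemma \ref{lem:hstarinv} with $\vb=\vb_0$ and $\epsilon=\eta_0$, noting that (by the $A(0)$ bound feeding the query-counting argument below) the threshold vector $\vb_0$ produced in steps \ref{step:starti}--\ref{step:endi} satisfies $\vb_0\le\va^*$, which is exactly what makes $S_{\vb_0}$ an instance of the $S_\vb$ construction in that lemma and gives $h^*\in V_\nu(S_{\vb_0},\eta_0,\chr[\vb_0])=V_0$; and $C(0)$ then follows because $\hat\eta_0=\max_{h\in V_0}\errneg(S_{\vb_0},h)$ and $h^*\in V_0$, so $\errneg(S_{\vb_0},h^*)\le\hat\eta_0$, while $\errneg(D,h^*)\le\errneg(S_{\vb_0},h^*)\ap{1+\nu,\eta_0}$ by the sample-size choice and Lemma \ref{lem:errneg} — but with one wrinkle: $S_{\vb_0}$ is not $S_0$, it has some labels flipped to $-1$. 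The key observation resolving this is that the flipped points are exactly the unqueried ones, and (using $A$ to bound the number of negatives below the threshold in direction $i$) $h^*$ itself labels all of them $-1$, so flipping does not increase $\errneg(S,h^*)$; this is the same bookkeeping that underlies Lemma \ref{lem:hstarinv}. The inductive step $t>0$ is identical in structure, with $\eta_0$ replaced by $\eta_t$, plus the argument of the previous paragraph to get $A(t)$ from non-termination and $C(t-1)$.

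The quantitative heart of each step is the claim that $\vb_t\le\va^*$, i.e. that for each coordinate $i$ the scan in the inner \textbf{while} loop stops at a point whose $i$-th coordinate is at most $a^*[i]$. Here we use that the loop runs until $\ceil{(1+\nu)\eta_t|S_t|}+1$ negatives are seen; since $\errneg(D,h^*)\le\eta_t$ (Claim $A(t)$), a VC/uniform-convergence bound (the $m_\nu$ guarantee from \eqref{eq:vc}, applied via Lemma \ref{lem:errneg} to $\errneg$) says the number of sample points $x\in S_t$ with $x[i]\ge a^*[i]$ and $Y=-1$ is at most $(1+\nu)\eta_t|S_t|$ with high probability; so after seeing one more negative than that many, the scan (which proceeds from the largest $i$-th coordinate downward) must have crossed below $a^*[i]$. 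That gives $b_t[i]\le a^*[i]$ for every $i$, hence $\vb_t\le\va^*$, which is the hypothesis needed to invoke Lemma \ref{lem:hstarinv} and obtain $B(t)$; and $B(t)$ plus the definition of $\hat\eta_t$ plus Lemma \ref{lem:errneg} gives $C(t)$ exactly as in the base case.

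The main obstacle I expect is managing the failure probabilities and the "$\forall^\delta$" bookkeeping cleanly across all rounds: each round draws a fresh sample $S_t$ of size $m_\nu(\eta_t,\delta/\log_2(1/\emin),10d)$, and we need a union bound over the $\le\log_2(1/\emin)+1$ rounds so that \emph{all} the uniform-convergence events (for $\err$, for $\errneg$, and for the composite function classes $\cF$ of VC-dimension $\le 10d$ from Lemmas \ref{lem:vcf}--\ref{lem:hstarinv}) hold simultaneously with total probability $1-\delta$; the per-round confidence $\delta/\log_2(1/\emin)$ is chosen precisely for this. The secondary subtlety is making sure the "flip unqueried labels to $-1$" operation interacts correctly with all three uniform-convergence statements — this is handled by the observation that on the high-probability event $\vb_t\le\va^*$, the flips only touch points that $h^*$ (and every hypothesis in $\chr[\vb_t]$) already labels $-1$, so the relevant errors and negative errors on $S_{\vb_t}$ relate to those on $D$ through the already-established lemmas.
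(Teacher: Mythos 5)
Your proposal is correct and follows essentially the same route as the paper's proof: induction with $A(0)$ trivial, $C(t)$ plus the failed termination test giving $A(t+1)$, the counting argument on queried negatives (via uniform convergence of $\errneg$) giving $\vb_t\le\va^*$ and hence $B(t)$ through Lemma \ref{lem:hstarinv}, and $B(t)$ plus $\errneg(S_{\vb_t},h^*)=\errneg(S_t,h^*)$ giving $C(t)$. The probability bookkeeping via a union bound over the $O(\log_2(1/\emin))$ rounds also matches the paper.
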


\begin{proof}[of Lemma \ref{lem:claims}]
We will prove the claims by induction on $t$. At each round $t \leq T \leq 1/\log_2(1/\emin)$, $|S_t| \geq m(\eta_t,\delta/\log_2(1/\emin),d)$, thus $\forall^\delta$, uniform convergence as stated in \eqref{eq:vc} holds for all rounds simultaneously. We assume this for the rest of the proof .

First, claim $A(0)$ trivially holds since $\eta_0 = 1$.
It is also easy to see that if claim $C(t)$ holds, and the algorithm reaches round $t+1$,
then claim $A(t+1)$ holds: If \algref{alg:aprs} reached round $t+1$, 
then the condition in step \ref{step:if} failed at time $t$, meaning $\hat{\eta}_t \leq \eta_t/4$. By $C(t)$, 
 $\errneg(D,h^*) \leq \hat{\eta}_t\ap{1+\nu,\eta_t} \leq \eta_t/2 = \eta_{t+1}$ (since $\nu \leq 1$), which proves $A(t+1)$.

We have left to show that claim $A(t)$ implies claims $C(t)$ and $B(t)$. Assume that $A(t)$ holds. 
First, suppose not all the points in $S_t$ are queried. 
To prove $B(t)$, note that $\errneg(S_t,h^*) \leq \errneg(D, h^*)\ap{1+\nu,\eta_t}$. Since $\errneg(D,h^*) \leq \eta_t$, this implies an upper bound $\errneg(S_t,h^*) \leq (1+\nu)\eta_t$.

We now show that $h^* \in \chr[\vb_t]$. 
If all the points in $S_t$ are queried, then $\vb_t$ is the zero vector, thus $\chr[\vb_t] = \chr$ and $h^* \in \chr[\vb_t]$.  If not all the points in $S_t$ are queried, then the algorithm queried more than $(1+\nu)\eta_t|S_t|$ negative points in each direction, thus at least one of those points is labeled negative by $h^*$. The smallest value of coordinate $i$ queried in iteration $i$ of round $t$ is $b_t[i]$.  Therefore the threshold of $h^*$ in direction $i$ is at most $b_t[i]$. This implies $h^* \in \chr[\vb_t]$ and furthermore that $h^* = h_{\va^*}$ for some $\va^* \geq \vb_t$.
By \lemref{lem:hstarinv}, $h^* \in V_\nu(S_{\vb_t}, 4,\epsilon,\chr[\vb_t])= V_t$.
This proves $B(t)$.

For $C(t)$, note that $\hat{\eta}_t = \max_{h \in V_t} \errneg(S_{\vb_t},h)$, hence by $B(t)$, $\hat{\eta}_t \geq \errneg(S_{\vb_t}, h^*) = \errneg(S_t,h^*)$. The claim follows since $\errneg(D,h^*)\leq \errneg(S_t,h^*)\ap{1+\nu,\eta_t}$.
\end{proof}
The last lemma provides a the stopping condition of the algorithm.
\begin{lemma}\label{lem:stop}
If $\err(D,\cH) > \emin$ then $\forall^\delta$ the algorithm stops at round at least $\log_2(1/8\err(D,\cH))$.
\end{lemma}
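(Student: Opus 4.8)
The goal is to show that the early-termination test of step~\ref{step:if}, ``$\hat\eta_t>\eta_t/4$'', is never triggered while $\eta_t=2^{-t}$ is still substantially larger than $\eta:=\err(D,\cH)$. Concretely, I will argue that $\hat\eta_t<\eta_t/4$ for every round $t$ that the algorithm reaches with $\eta_t>8\eta$, i.e.\ with $t<\log_2(1/(8\eta))$. Since $\eta>\emin$ gives $\floor{\log_2(1/\emin)}>\log_2(1/\emin)-1>\log_2(1/\eta)-1>\log_2(1/(8\eta))$, the ``\textbf{for}'' loop ranges over all such integer rounds (and at least one more), so the algorithm cannot return before round $\log_2(1/(8\eta))$, which is exactly the claim.

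Fix such a round $t$ and work on the $\forall^\delta$ uniform-convergence event already used in \lemref{lem:claims}. Let $h^*\in\argmin_{h\in\chr}\err(D,h)$, say $h^*=h_{\va^*}$. By \lemref{lem:claims}, $h^*\in V_t\subseteq\chr[\vb_t]$ and moreover $\va^*\geq\vb_t$; hence $\err(S_{\vb_t},\chr[\vb_t])\leq\err(S_{\vb_t},h^*)$. Every point whose label was overwritten to $-1$ in forming $S_{\vb_t}$ lies (weakly) below $\vb_t\leq\va^*$ and is therefore labelled $-1$ by $h^*$, so the overwriting can only remove mistakes of $h^*$: $\err(S_{\vb_t},h^*)\leq\err(S_t,h^*)$. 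Since $|S_t|=m_\nu(\eta_t,\cdot,10d)\geq m_\nu(\eta_t,\cdot,d)$, \eqref{eq:vc} gives $\err(S_t,h^*)\leq\err(D,h^*)\ap{1+\nu,\eta_t}=\eta\ap{1+\nu,\eta_t}$. Writing $e:=\err(S_{\vb_t},\chr[\vb_t])$ and using $\eta<\eta_t/8$ together with $\nu=\alpha/25\leq 1/25$, we get $e\leq\max\{(1+\nu)\eta,\ \eta+\nu\eta_t\}<\eta_t/8+\nu\eta_t<\eta_t$.

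Now bound $\hat\eta_t$. By the definition of $V_t=V_\nu(S_{\vb_t},\eta_t,\chr[\vb_t])$, every $h\in V_t$ satisfies $\err(S_{\vb_t},h)\leq e+(2\nu+\nu^2)\max(e,\eta_t)=e+(2\nu+\nu^2)\eta_t$, and trivially $\errneg(S_{\vb_t},h)\leq\err(S_{\vb_t},h)$. Therefore $\hat\eta_t=\max_{h\in V_t}\errneg(S_{\vb_t},h)\leq e+(2\nu+\nu^2)\eta_t<(\tfrac18+3\nu+\nu^2)\eta_t$, which is strictly below $\eta_t/4$ once $\nu\leq 1/25$ (then $\tfrac18+3\nu+\nu^2<\tfrac14$). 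So the test of step~\ref{step:if} does not fire at round $t$ and the algorithm proceeds to round $t+1$.

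The only genuinely delicate point is the identification, on the good event, of the ``overwritten'' subpool defining $S_{\vb_t}$ with the region on which $h^*$ is forced to be negative; this rests on the conclusion $\va^*\geq\vb_t$ of \lemref{lem:claims}, which in turn uses that each inner loop queries more than $(1+\nu)\eta_t|S_t|$ negatives in its coordinate. Everything else is uniform convergence (\eqref{eq:vc}, invoked already inside \lemref{lem:claims}) plus a one-line verification that the numerical constants close for $\nu=\alpha/25$; I expect no real obstacle, only bookkeeping of the slack $\tfrac14-(\tfrac18+3\nu+\nu^2)>0$.
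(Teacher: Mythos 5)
Your proof is correct and follows essentially the same route as the paper's: fix $T=\log_2(1/(8\eta))$, invoke claim $B(t)$ from \lemref{lem:claims} to place $h^*\in V_t\subseteq\chr[\vb_t]$, bound $\errneg(S_{\vb_t},h)\le\err(S_{\vb_t},h)$ via the definition of $V_\nu$, chain through $\err(S_{\vb_t},h^*)\le\err(S_t,h^*)$ and the sample-size guarantee from \eqref{eq:vc}, and close the arithmetic with $\nu=\alpha/25$. The only (minor and favorable) differences are that you unpack the $V_\nu$ slack directly to get the slightly tighter constant $1/8+3\nu+\nu^2$ rather than the paper's $1/8+((1+\nu)^3-1)$, and that you explicitly verify the \textbf{for}-loop range $t\le\floor{\log_2(1/\emin)}$ actually covers round $T$ (a point the paper leaves implicit).
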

\begin{proof}
Let $T = \log_2(1/8\err(D,\cH))$. We show that $\forall^\delta$ the algorithm does not stop at any $t \leq T$, by showing that the condition in step \ref{step:if} does not hold, that is $\hat{\eta}_t = \max_{h \in V_t} \errneg(S_{\vb_t},h) \leq \eta_t/4$. By \lemref{lem:claims}, claim $B(t)$, $h^* \in V_t$. Therefore,
by definition of $V_t$, for all $h \in V_t$, 
\[
\errneg(S_{\vb_t},h) \leq \err(S_{\vb_t},h) \leq \err(S_{\vb_t},h^*)\ap{(1+\nu)^2,\eta_t} \leq \err(S_t,h^*)\ap{(1+\nu)^2,\eta_t}.
\]
Due to the size of $S_t$ we also have $\forall^\delta, \forall t\leq T, \err(S_t,h^*) \leq \err(D,h^*)\ap{1+\nu,\eta_t}$. Combining these inequalities we get 
$\hat{\eta}_t \leq \err(D,h^*)\ap{(1+\nu)^3,\eta_t}$. For $t \leq T$, $\err(D,h^*) \leq 2^{-t}/8 = \eta_t/8$.
$\hat{\eta}_t \leq \err(D,h^*) + ((1+\nu^3)-1)\eta_t \leq \eta_t(1/8 + ((1+\nu^3)-1)).$ 
Since $\nu \leq 1/25$, one can check that $\hat{\eta}_t \leq \eta_t/4$.
\end{proof}

We are finally ready to prove \thmref{thm:aprmain}.

\begin{proof}[of \thmref{thm:aprmain}]
Let $T$ be the round in which the algorithm returns $\hat{h}$.
The number of negative labels $N$ observed by the algorithm satisfies
\begin{align*}
N \leq &\sum_{t=0}^T  d(1+\nu)\eta_t(\ceil{m_\nu(\eta_t,\delta/\log_2(1/\emin),10d)}+1)
\end{align*}
By the definition on page \pageref{page:vc}, $m_\nu(\eta,\delta,d) = C(d\ln(c/\nu\eta)+\ln(c/\delta))/\nu^2\eta$. Also $1+\nu \leq 2$. Therefore
\begin{align*}
&N\leq C(T + \frac{d}{\nu^2} \sum_{t=0}^T(d\ln(c/\nu\eta_t) + \ln(c\log_2(1/\emin)/\delta))\\
&\leq Cd(d\sum_{t=0}^T\ln(c/\nu\eta_t)+T\ln(c\ln(1/\emin)/\delta)).
\end{align*}
We have $\sum_{t=0}^T\ln(c/\eta_t) \leq C\sum_{t=0}^T t + C \leq CT^2+C$.
In addition, $T \leq \log_2(1/\emin)$. Therefore
the algorithm observes at most $C d^2\ln^2(1/\emin)\ln(c/\nu\delta)/\nu^2$ negative examples.
Since $\nu = \alpha/25$, we get the same auditing complexity for $\alpha$.

For the second part of the theorem, by \lemref{lem:claims}, $\forall^\delta \argmin_{h\in\chr}\err(D,h) \in V_T = V(S_{\vb_T}, \eta_T,\chr[\vb_T])$. Therefore, by \lemref{lem:hstarinv}, $\err(D,\hat{h}) \leq \err(D,\chr)\ap{(1+\nu)^2,\eta_T}.$
By \lemref{lem:stop}, we have that $T \geq \min\{\log_2(1/8\err(D,\cH),\log_2(1/2\emin)\}$, and therefore $\eta_T \leq \max\{8\err(D,\chr),2\emin\}$. 
It follows that 
	\begin{align*}
	\forall^\delta \quad \err(D,\hat{h}) &\leq \max\big\{(1+\nu)^2 \err(D,\chr), \\
	& \qquad\qquad\err(D,\chr)+ ((1+\nu)^2-1) \cdot \max\{8\err(D,\chr),2\emin\} \big\} \\
	&\leq \max\{(1+8((1+\nu)^2-1))\err(D,\chr),\err(D,\chr)+ 2((1+\nu)^2-1) \emin \big\} \\
	& \leq \err(D,\chr)\ap{(1+8((1+\nu)^2-1)),\emin}.
	\end{align*}
	Now, since $\nu \leq 1$ and $\nu = \alpha/25$, we have $8((1+\nu)^2-1) = 8(2\nu + \nu^2) \leq 24\nu \leq \alpha$. The statement of the theorem immediately follows.
\end{proof}

\subsection{Proofs for Section \ref{sec:general}}

%\todo{xxx} this is meant to avoid a hyperref error, remove!
%
%xxx
%
%xxx

\begin{proof}[of \thmref{thm:greedy}]
Assume without loss of generality that $\cH|_S = \cH$.
For an algorithm $\cA$, let $Q^k_{\cA,h} = (q_{\cA,h}^1,\ldots,q_{\cA,h}^k)$ be the sequence of first $k$ queries the algorithm makes if $h$ is the true hypothesis. $Q_{\cA,h}$ stands for the entire sequence until the algorithm stops with $V(Q_{\cA,h},h) = \{h\}$. Denote by $\circ$ the concatenation of two sequences. Let $\cost(Q,h)$ be the total cost of a set or sequence of queries $Q$ if the true hypothesis is $h$. For a set of points $X \subseteq \cX$ and a hypothesis $h\in\cH$, let $V(X,h)$ be the set of hypotheses that are consistent with the labeling of $h$ on $X$, that is $V(X,h) = \{g \in \cH \mid  \forall x \in S, g(x) = h(x)\}$.

By definition, there exists an algorithm $\cA$ such that for any $h \in \cH$, 
$\cost(Q_{\cA,h}) \leq \caud$. Denote $\caud$ by $\OPT$ for brevity.
Now, consider a greedy algorithm $\cB$. If $h$ is the true hypothesis then after $k$ queries, the version space is $V(Q^k_{\cB,h},h)$. Consider running $\cA$ after executing $Q^k_{\cB,h}$. Let the hypothesis $\bar{h} \in V(Q^k_{\cB,h},h)$ be such that for every length of sub-sequence $n \leq |Q_{\cA,\bar{h}}|$, and for 
every $y \in \cY$, 
\begin{equation}\label{eq:hbarmin}
\frac{|V(Q^k_{\cB,h} \circ Q^{n-1}_{\cA,\bar{h}},\bar{h}) \setminus V(Q^k_{\cB,h} \circ Q^{n}_{\cA,\bar{h}},\bar{h})|}{\cost(q^n_{\cA,\bar{h}},\bar{h}(q^n_{\cA,\bar{h}}))} \leq \frac{|V(Q^k_{\cB,h} \circ Q^{n-1}_{\cA,\bar{h}},g) \setminus V(Q^k_{\cB,h} \circ Q^n_{\cA,\bar{h}},g)|}{\cost(q^n_{\cA,\bar{h}},y)},
\end{equation}
where $g$ is equal to $\bar{h}$ on $Q^k_{\cB,h} \circ Q^{n-1}_{\cA,\bar{h}}$ but labels $q^n_{\cA,\bar{h}}$ differently.
Such a hypothesis clearly exists if there are two possible labels: it can be found by selecting, at each iteration, the hypothesis with the label for $q^n_{\cA,\bar{h}}$ that would incur the smaller ratio.

By the definition of $\OPT$,
\[
\sum_{n=1}^{|Q_{\cA,\bar{h}}|} \cost(q^n_{\cA,\bar{h}},\bar{h}) = \cost(Q_{\cA,\bar{h}})\leq \OPT.
\]
Also $|V(Q^k_{\cB,h}\circ Q_{\cA,\bar{h}},\bar{h})| = 1$. Therefore
\begin{align*}
\sum_{n=1}^{|Q_{\cA,\bar{h}}|} |V(Q^k_{\cB,h} \circ Q^{n-1}_{\cA,\bar{h}},\bar{h}) \setminus V(Q^k_{\cB,h} \circ Q^{n}_{\cA,\bar{h}},\bar{h})| &= |V(Q^k_{\cB,h},\bar{h}) \setminus V(Q^k_{\cB,h}\circ Q_{\cA,\bar{h}},\bar{h})|\\
&= |V(Q^k_{\cB,h},\bar{h})| -1 \\&= |V(Q^k_{\cB,h},h)| -1.
\end{align*}
It follows that there exists at least one $n$ such that 
\[
\frac{|V(Q^k_{\cB,h} \circ Q^{n-1}_{\cA,\bar{h}},\bar{h}) \setminus V(Q^k_{\cB,h} \circ Q^{n}_{\cA,\bar{h}},\bar{h})|}{\cost(q^n_{\cA,\bar{h}},\bar{h})} \geq \frac{|V(Q^k_{\cB,h},h)|-1}{\OPT}.
\]
 Moreover, for this $n$, due to \eqref{eq:hbarmin}, 
\[
\min_{y \in \cY} \frac{|V(Q^k_{\cB,h} \circ Q^{n-1}_{\cA,\bar{h}},g) \setminus V(Q^k_{\cB,h} \circ Q^n_{\cA,\bar{h}},g)|}{\cost(q^n_{\cA,\bar{h}},y)} \geq \frac{|V(Q^k_{\cB,h},h)|-1}{\OPT}.
\]
It follows that
\[
\min_{y \in \cY} \frac{|V(Q^k_{\cB,h},g) \setminus V(Q^k_{\cB,h}\circ q^n_{\cA,\bar{h}},g)|}{\cost(q^n_{\cA,\bar{h}},y)} \geq \frac{|V(Q^k_{\cB,h},h)|-1}{\OPT}.
\]
Therefore, the query $q^{k+1}_{\cB,h}$, selected by the greedy algorithm at step $k+1$, satisfies 
\[
\frac{|V(Q^k_{\cB,h} ,h) \setminus V(Q^{k+1}_{\cB,h},h)|}{\cost(q^{k+1}_{\cB,h},h)} \geq \frac{|V(Q^k_{\cB,h},h)|-1}{\OPT}.
\]
It follows that $\cost(q^{k+1}_{\cB,h},h) \leq \OPT$. In addition, it follows that 
\begin{align*}
|V(Q^{k+1}_{\cB,h},h)|-1 &\leq (|V(Q^{k}_{\cB,h},h)|-1)(1 - \cost(q^{k+1}_{\cB,h},h)/\OPT) \\
&\leq (|V(Q^{k}_{\cB,h},h)|-1)\exp(-\cost(q^{k+1}_{\cB,h},h)/\OPT).
\end{align*}
This analysis holds for every length $k$ of a sub-sequence $Q_{\cB,h}$. Therefore by induction
\begin{align*}
|V(Q^{k}_{\cB,h},h)|-1 &\leq (|\cH|-1)\prod_{i=1}^k\exp(-\cost(q^{i}_{\cB,h},h)/\OPT) \\
&= (|\cH|-1)\exp(-\cost(Q^k_{\cB,h},h)/\OPT).
\end{align*}
$\cB$ terminates at the minimal $k$ such that $|V(Q^{k}_{\cB,h},h)|-1 < 1$. This holds for any $k$ such that $\exp(-\cost(Q^k_{\cB,h},h)/\OPT) < 1/(|\cH|-1)$, which means $\cost(Q^k_{\cB,h},h) > \ln(|\cH|-1)\cdot \OPT$. Let $k'$ be the minimal integer that satisfies this inequality. Then $\cost(Q^{k'-1}_{\cB,h},h) \leq \ln(|\cH|-1)\cdot \OPT$. Since $\cost(q^{k'}_{\cB,h},h) \leq \OPT$, it follows that  $\cost(Q^{k'}_{\cB,h},h) \leq (\ln(|\cH|-1)+1)\cdot \OPT$.
This analysis holds for any $h \in \cH$, thus the worst-case cost of the greedy algorithm is at most $(\ln(|\cH|-1)+1)\cdot \OPT$.

\end{proof}

\bibliographystyle{abbrvnat}
\bibliography{mybib}

\end{document}